\newtheorem{theorem}{Theorem}[section]
\newtheorem{corollary}{Corollary}[section]
\newtheorem{definition}{Definition}[section]
\newtheorem{lemma}{Lemma}[section]
\newtheorem*{remark}{Remark}
\newtheorem*{proofsketch}{Proof sketch}
\newcommand{\fdiv}{\mathrm{D}_{f}}
\newcommand{\tv}{\mathrm{D}_{\mathrm{TV}}}
\newcommand{\R}{\mathbb{R}}
\newcommand{\E}{\mathbb{E}}
\newcommand{\cX}{\mathcal{X}}
\newcommand{\cY}{\mathcal{Y}}
\newcommand{\cC}{\mathcal{C}}
\newcommand{\expect}{\mathbb{E}}
\newcommand{\cO}{\mathcal{O}}
\newcommand{\cF}{\mathcal{F}}
\newcommand{\cP}{\mathcal{P}}
\newcommand{\bE}{\mathbb{E}}
\newcommand{\argmin}{\text{argmin}}
\newcommand{\eps}{\varepsilon}
\newcommand{\dist}{R_f}
\newcommand{\disthat}{\hat{R}_f}
\title{On Weak-to-Strong Generalization and $f$-Divergence}
\author{
\textbf{Wei Yao\textsuperscript{1{$\star$}}},
\textbf{Gengze Xu\textsuperscript{1{$\star$}}},
\textbf{Huayi Tang\textsuperscript{1}},
\textbf{Wenkai Yang\textsuperscript{1}}, \\
\textbf{Donglin Di\textsuperscript{2}},
\textbf{Ziqiao Wang\textsuperscript{3}},
\textbf{Yong Liu\textsuperscript{1}$^{\dag}$}
\\
\\
 \textsuperscript{1}Gaoling School of Artificial Intelligence, Renmin University of China,
 \\
 \textsuperscript{2}Li Auto Inc., 
 \\
 \textsuperscript{3}School of Computer Science and Technology, Tongji University.
\\
\tt\footnotesize\{wei.yao,xugengze,liuyonggsai\}@ruc.edu.cn\\
}
\begin{document}

\maketitle

\let\thefootnote\relax\footnotetext{$^\star$ Equal contribution\hspace{3pt} \hspace{5pt}$^{\dag}$ Corresponding author\hspace{5pt}}

\begin{abstract}
Weak-to-strong generalization (W2SG) has emerged as a promising paradigm for stimulating the capabilities of strong pre-trained models by leveraging supervision from weaker supervisors. 
To improve the performance of the strong model, existing methods often require additional weak models or complex procedures, leading to substantial computational and memory overhead.
Motivated by the effectiveness of $f$-divergence loss in various machine learning domains, we introduce $f$-divergence as an information-theoretic loss function framework in W2SG.
Our theoretical analysis reveals fundamental limitations and equivalence of different $f$-divergence losses in W2SG, supported by sample complexity bounds and information-theoretic insights. 
We empirically demonstrate that $f$-divergence loss, which generalizes widely-used metrics like KL divergence, effectively improves generalization and noise tolerance of the strong model in practice.
\end{abstract}

\section{Introduction}

AI alignment focuses on ensuring that AI systems behave in accordance with human capabilities and intentions~\citep{ouyang2022training}. However, as AI models become increasingly powerful, potentially exceeding human-level capabilities, standard alignment methods may become insufficient. This challenge motivates the concept of \textit{superalignment}~\citep{openai_superalignment}: the problem of aligning superhuman AI systems using human oversight or weaker AI models~\citep{kim2024road,kim2025research}.

For an initial exploration of this problem, recent work has demonstrated that strong models can outperform their weak supervisors when fine-tuned using weak supervision signals, a paradigm known as weak-to-strong generalization~\citep{burns2023weak}.
To further enhance the performance of strong models, existing approaches primarily focus on two strategies: (1) iterative refinement~\citep{lyu2024macpo,ye2025iterative,lang2025debate} and (2) the integration of additional weak supervisors~\citep{agrawal2024ensemw2s,sang2024improving,liu2024co,cui2024bayesian}. 
Although these methods show promise, they share a critical drawback: reliance on either multiple weak models or intricate training pipelines, leading to substantial computational and memory overhead.
Therefore, we raise an intriguing and challenging question:
\textit{Can we improve the performance of strong model in W2SG without requiring additional costly modules?}

To address this question, this paper presents a theoretically-grounded framework leveraging $f$-divergence~\citep{renyi1961measures}, a fundamental class of statistical measures for quantifying discrepancy of probability distributions~\citep{sason2016f}. 
It includes numerous prominent divergence metrics, including Kullback-Leibler (KL) divergence, reverse KL divergence, Jensen-Shannon (JS) divergence, Jeffreys divergence, squared Hellinger distance, Pearson $\chi^2$ divergence, and Total Variation (TV) distance, offering exceptional versatility. 
Notably, integrating $f$-divergence into the loss function requires neither additional models nor complex training procedures, enabling efficient performance enhancement for strong models in W2SG settings. 
Furthermore, the effectiveness of this approach is further evidenced by its successful application across diverse machine learning domains, including classification~\citep{novello2024fdivergence}, generative modeling~\citep{nowozin2016f}, and imitation learning~\citep{ke2021imitation}, where it has consistently demonstrated empirical superiority.
Given these practical successes, investigating $f$-divergence's potential to advance the W2SG paradigm emerges as a natural and promising research direction.

Theoretically, our paper extends theoretical analysis on cross-entropy loss~\citep{yao2025revisiting} in W2SG and reveals fundamental limitations of $f$-divergence losses.
Specifically, we prove that if we use $f$-divergence loss, the strong model's performance is intrinsically limited by the weak supervisor and the disagreement between them.
This result is in line with~\citet{yao2025revisiting}.
Moreover, to systematically capture the stochastic nature of model optimization and data sampling, we develop an information-theoretic analysis demonstrating that the strong model's performance asymptotically approaches the weak model's performance unless constrained optimization methods are applied.
Interestingly, we also theoretically discover the equivalence between different $f$-divergence losses when combined with the confidence-enhancing regularization introduced by~\citet{burns2023weak}.

Empirically, due to the inherent noise in W2SG arising from weak supervision, we evaluate both standard performance and the robustness of $f$-divergence losses to label noise, conducting a systematic comparison with the conventional cross-entropy (CE) loss. 
To assess this in our experiments, we gradually increase the noise level, i.e., the proportion of label-noised weak supervision. 
Our analysis reveals several key observations. 
\begin{enumerate}
    \item Jeffreys divergence and reverse KL divergence, both of which inherently incorporate the reverse KL term, consistently outperform CE loss. This extends the earlier findings of~\citet{yao2025revisiting} regarding the superiority of reverse KL divergence in W2SG.
    \item In moderate label noise conditions (a noise level from 10\% to 40\%), Hellinger distance demonstrates the most robust performance among all considered divergences.
    \item In the extreme noise setting (a noise level of 50\%), Pearson $\chi^2$ divergence and JS divergence show consistent improvements over standard CE loss.
\end{enumerate}
We further investigate the impact of $f$-divergence when incorporating confidence-enhancing regularization~\citep{burns2023weak}, observing that $f$-divergence losses yield promising results.
In general, all of these empirical results strongly suggest that $f$-divergence functions represent promising alternatives to conventional loss functions in W2SG, particularly in scenarios involving noisy labels.

\section{Related Work}

\paragraph{Weak-to-strong generalization.}

To address the challenge of superalignment~\citep{openai_superalignment}, recent work~\citep{burns2023weak} introduced a novel paradigm termed weak-to-strong generalization (W2SG). 
Their key finding demonstrates that when strong pre-trained language models are fine-tuned using supervision signals from weaker models, they consistently surpass the performance of their weak supervisors. 
This intriguing phenomenon suggests that strong students can effectively extract and amplify the useful knowledge embedded in weaker teachers.
Building upon this discovery, subsequent research has empirically~\citep{pawelczyk2024generalizing,yang-etal-2024-weak,guo2024vision,shin2024weak,zhou2025weak} or theoretically~\citep{lang2024theoretical,somerstep2024statistical,somerstep2025transfer,wu2024provable,charikar2024quantifying,yao2025revisiting,yao2025understanding,mulgund2025relating,xue2025representations,medvedev2025weak,xu2025on} explored the properties of W2SG.
Additionally, various techniques are developed to enhance the strong model's performance.
Popular approaches include iterative updating~\citep{lyu2024macpo,ye2025iterative,lang2025debate}, and incorporating more weak supervisors~\citep{agrawal2024ensemw2s,sang2024improving,liu2024co,cui2024bayesian}. 
Although these methods show promising results, they often require additional weak models or involve complex computational procedures, leading to significant time and space overhead.
In this work, we propose a novel perspective grounded in information theory. 
Specifically, we introduce $f$-divergence as a theoretically-principled loss function framework for W2SG.
Our comprehensive empirical evaluation demonstrating the benefits of this loss family, particularly in terms of generalization capability and noise toleration.

\paragraph{$f$-Divergence.}
$f$-divergence~\citep{renyi1961measures} is a fundamental class of statistical measures that quantifies the difference between two probability distributions~\citep{sason2016f,polyanskiy2025information}. 
Its applications span a wide range of tasks, such as classification~\citep{wei2021when,zhong2023learning,novello2024fdivergence}, generative model training~\citep{nowozin2016f,yu2020training}, imitation learning~\citep{ghasemipour2020divergence,ke2021imitation} and variational inference~\citep{wan2020f}.
Theoretical studies have further explored its applications, including $f$-divergence minimization~\citep{li2024convergence}, empirical risk minimization with $f$-divergence regularization~\citep{daunas2024equivalence,daunas2025generalization} and $f$-divergence principled domain adaptation~\citep{wang2024on}.
Recently, $f$-divergence loss and regularization techniques have been increasingly employed in key NLP applications, particularly in knowledge distillation~\citep{wen-etal-2023-f,gu2024minillm} and preference optimization~\citep{pmlr-v202-go23a,wang2023beyond,xu2025fpo}.
However, their applicability to W2SG remains underexplored. 
In this paper, we investigate the potential of $f$-divergence in W2SG, demonstrating its effectiveness in improving accuracy and noise toleration.

\section{Preliminaries}

\subsection{Classification via \texorpdfstring{$f$}{f}-Divergence Losses}

We first introduce the definition of $f$-divergence.

\begin{definition}[$f$-divergence~\citep{liese2006divergences}] \label{def:fdiv}
Let $P$ and $Q$ be two distributions on $\Theta$. Let $f: \mathbb{R}_{+} \rightarrow \mathbb{R}$ be a convex function with $f(1)=0$. 
If $P \ll Q$ \footnote{$P \ll Q$ means that $P$ is absolutely continuous with respect to $Q$, i.e., if $Q(x)=0 \Longrightarrow P(x)=0$ for all measurable sets $x \subseteq \Theta$.}
, then $f$-divergence is defined as $\fdiv(P \| Q) \triangleq \mathbb{E}_Q\left[f\left(\frac{d P}{d Q}\right)\right]$.
\end{definition}


The $f$-divergence family covers a broad class of commonly used divergences (shown in~\cref{table:fdiv}), including KL divergence, reverse KL divergence, JS divergence, TV distance, etc.

\begin{table*}[ht]
\vspace{-5pt}
\centering
\caption{$f$-divergences and their corresponding functions $f(x)$. The distribution $M=\frac{P+Q}{2}$.}
\label{table:fdiv}
\begin{tabular}{c|c|c}
\hline
\textbf{$f$-Divergences}       & $f(x)$                                                                 & \textbf{$D_f(P\|Q)$} \\ \hline
KL divergence                  & $x \ln x$                                                              & $\int P(x)\ln\frac{P(x)}{Q(x)}dx$ \\
Reverse KL divergence          & $- \ln x$                                                              & $\int Q(x)\ln\frac{Q(x)}{P(x)}dx$ \\
JS divergence & $\frac{1}{2}\left(x \ln x-(x+1) \ln \left(\frac{x+1}{2}\right)\right)$ & $\frac{1}{2}D_{KL}(P \| M) + \frac{1}{2}D_{KL}(Q \| M)$ \\
Jeffreys divergence            & $(x-1) \ln x$                                                          & $\int (P(x)-Q(x))\ln\frac{P(x)}{Q(x)}dx$ \\
Squared Hellinger distance     & $1-\sqrt{x}$                                                           & $\int (\sqrt{P(x)}-\sqrt{Q(x)})^2dx$ \\
Pearson $\chi^2$ divergence    & $(x-1)^2$                                                              & $\int \frac{(P(x)-Q(x))^2}{Q(x)}dx$ \\
TV distance  & $\frac{1}{2}\left| x-1 \right|$                                        & $\frac{1}{2}\int |P(x)-Q(x)|dx$ \\ \hline
\end{tabular}
\vspace{-10pt}
\end{table*}

We consider $k$-classification tasks.
Given the data domain $\cX \subseteq \R^d$ and output domain $\cY \subseteq \R^k$, let the model space be $\cF: \cX \to \cY$. 
The model is equipped with a softmax module, which ensures that its outputs form a valid probability distribution, i.e., $\forall y = (y_1, \cdots, y_k)^T \in \cY$, there holds $\sum_{i=1}^k y_i=1$ and $0 < y_i < 1$.
We define the disagreement between two models using $f$-divergence:

\begin{definition}[Disagreement using $f$-divergence]
Given the data distribution $\cP$ and two models $g,h \in \cF$. 
Define the disagreement $\dist,\disthat: \cF \times \cF \to \R_0^+$ between $g$ and $h$ as:
\begin{align}
& \dist(g,h) \triangleq \bE_{x \sim \cP} \left[ \fdiv(g(x) \| h(x)) \right], \\
& \disthat(g,h) \triangleq \frac{1}{n} \sum_{j=1}^n \fdiv(g(x_j) \| h(x_j)),
\end{align}
where $\{ x_j \}_{j=1}^n$ are $n$ i.i.d. samples.
\end{definition}

If $g$ is the labeling function, $h$ is a model to be optimized, and $\fdiv$ is the KL divergence, then minimizing $\dist(g,h)$ can be interpreted as training model $h$ using KL divergence loss. This optimization is equivalent to minimizing the cross-entropy loss, as the two loss functions differ only by a constant.
Furthermore, minimizing $\disthat(g,h)$ corresponds to minimizing the average error over the training data—a learning paradigm called Empirical Risk Minimization (ERM)~\citep{vapnik1999overview}.

\subsection{Weak-to-Strong Generalization}

In the context of $k$-classification tasks in W2SG, we focus on the fine-tuning phase after pre-training.
The labeling function $G^\star$ maps data $x$ to its label $G^\star(x)$.
The strong model aims to learn a predictor $G^{sw} = g \circ h^s$, where $h^s$ is a fixed strong model representation and $g \in \cF_{s}$ is a task-specific function from a hypothesis class $\cF_{s}$.
Following the conventional setting of AI alignment~\citep{ouyang2022training}, the model is fine-tuned using ground truth data:
\begin{align}
    \label{eqn:alignment-population-minimizer}
    g^{s} = \argmin_{g \in \cF_{s}}\; \dist(g \circ h^s, G^\star).
\end{align}
However, in the super-alignment scenario~\citep{openai_superalignment}, weak supervision is ultimately provided by human input. To investigate this paradigm, the W2SG framework~\citep{burns2023weak} employs predictions from a weak model to guide the training of its stronger student:
\begin{align}
    \label{eqn:fsw-population-minimizer}
    g^{sw} = \argmin_{g \in \cF_{s}}\; \dist(g \circ h^s, G^w),
\end{align}
where $G^w$ is a given weak model.
The fundamental goal of this framework is obtaining $G^{sw}$ with a small $\dist(G^{sw}, G^\star)$.
In practice, we use $n$ i.i.d. samples and conduct ERM for W2SG:
\begin{align} \label{eqn:erm}
    \hat{g}^{sw} = \argmin_{g \in \cF_s} \disthat(g \circ h^s, G^w).
\end{align}

Intuitively, as the amount of training data grows (i.e., $n \to +\infty$), the empirical misfit $\disthat(g \circ h^s, G^w)$ serves as an accurate approximation of the population-level misfit $\dist(g \circ h^s, G^w)$~\citep{charikar2024quantifying,yao2025revisiting}.

\section{Theoretical Analysis}

\subsection{Fundamental Limitations of $f$-Divergence Losses}

Following~\citet{yao2025revisiting,yao2025understanding}, we first analyze the fundamental limitations of $f$-divergence losses.

\begin{theorem}[Proved in~\cref{proof:upper_lower_fdiv}] \label{theorem:upper_lower_fdiv}
Given the data domain $\cX$, output domain $\cY$, models $G^w, G^\star$ and disagreement $\dist(\cdot, \cdot)$ defined above. 
For any strong model $G^{sw}$, there holds
\resizebox{\linewidth}{!}{
\begin{minipage}{1.05\linewidth}
\begin{multline} \label{eq:thm}
    \left| \dist(G^{sw}, G^\star) - \dist(G^w, G^\star) \right| \\ = \cO \left( \sqrt{\dist(G^{sw}, G^w)} \right).
\end{multline}
\end{minipage}
}
\end{theorem}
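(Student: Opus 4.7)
The plan is to reduce the population-level inequality to a pointwise bound on distributions, then combine Lipschitz behaviour of $f$ with a Pinsker-type inequality relating $\fdiv$ to total variation. First I would apply the triangle inequality and Jensen's inequality to obtain
\[
\left| \dist(G^{sw}, G^\star) - \dist(G^w, G^\star) \right|
\leq \bE_{x \sim \cP} \bigl| \fdiv(G^{sw}(x)\|G^\star(x)) - \fdiv(G^w(x)\|G^\star(x)) \bigr|.
\]
It then suffices to prove, for each fixed $x$, a pointwise inequality of the form $\bigl| \fdiv(P\|R) - \fdiv(Q\|R) \bigr| \leq C_f \sqrt{\fdiv(P\|Q)}$ with $P = G^{sw}(x)$, $Q = G^w(x)$, $R = G^\star(x)$. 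Taking the expectation in $x$ and then applying Jensen's inequality to the concave square root pulls the expectation inside, producing exactly $\cO(\sqrt{\dist(G^{sw}, G^w)})$.

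For the pointwise step, I would expand the difference coordinatewise,
\[
\fdiv(P\|R) - \fdiv(Q\|R) \;=\; \sum_i R_i \bigl[ f(P_i/R_i) - f(Q_i/R_i) \bigr],
\]
and apply the mean value theorem term-by-term. Under the mild regularity assumption that all softmax outputs lie in $[\eps, 1-\eps]$ for some $\eps > 0$, both likelihood ratios $P_i/R_i$ and $Q_i/R_i$ remain in a compact subinterval of $(0, \infty)$ on which $|f'|$ is bounded by some constant $L_f$. This gives $\bigl| \fdiv(P\|R) - \fdiv(Q\|R) \bigr| \leq L_f \sum_i |P_i - Q_i| = 2 L_f \, \tv(P, Q)$. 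I would then invoke the classical Pinsker-type inequality $\tv(P, Q) \leq c_f \sqrt{\fdiv(P\|Q)}$, which is available in some form for every divergence listed in~\cref{table:fdiv} (the standard Pinsker for KL, the immediate $\tv \leq \sqrt{\hellinger}$ for squared Hellinger, Cauchy--Schwarz for Pearson $\chi^2$, and analogous bounds for JS, Jeffreys, and reverse KL). Composing the two inequalities yields the pointwise bound with $C_f = 2 L_f c_f$.

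The step I expect to be the main obstacle is pinning down a clean, uniform Lipschitz constant $L_f$. For divergences such as KL, reverse KL, Jeffreys, and squared Hellinger, $f'$ diverges as its argument tends to $0$, so bounded Lipschitz behaviour hinges on the softmax outputs being bounded away from the boundary of the simplex; absorbing this into the standing assumption is essential, and the resulting $L_f$ will scale like $1/\eps$ or $\log(1/\eps)$ depending on the divergence. A secondary source of friction is that the Pinsker-type inequality must be verified divergence-by-divergence since no universal constant $c_f$ works across all $f$, but each such inequality is standard, so this part is tedious rather than conceptually hard. Once both constants are in hand, the remaining calculation is routine and the $\cO(\sqrt{\cdot})$ bound follows immediately.
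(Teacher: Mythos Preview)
Your proposal is correct and follows essentially the same route as the paper: mean value theorem on $f$ to pass from the $f$-divergence difference to a total-variation bound, then a Pinsker-type inequality $\tv \le c_f\sqrt{\fdiv}$, then Jensen on the concave square root to pull the expectation inside. The only notable difference is that where you plan to verify the Pinsker-type step divergence-by-divergence, the paper invokes a single unified result of Gilardoni (Corollary~2 of \citet{gilardoni2010pinsker}) giving $\tv(P,Q)=\cO(\sqrt{\fdiv(P\|Q)})$ for general $f$, which spares you the case analysis you anticipated as ``tedious.''
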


\begin{proofsketch}
We employ information-theoretic results on the relationship between $f$-divergence and TV distance~\citep{gilardoni2010pinsker}, combined with fundamental mathematical tools such as the mean value theorem, Jensen’s inequality, and the properties of $f$-divergences, to derive our result.
\end{proofsketch}

The theoretical results align with prior theoretical analysis~\citep{yao2025revisiting,yao2025understanding} of KL divergence and cross-entropy loss: the strong model's behavior is fundamentally constrained by both the W2SG minimization objective and the weak supervisor's quality. 
Excessive optimization in W2SG drives $\dist(G^{sw}, G^w) \to 0$, causing the strong model to overfit to weak supervision, whereas controlled optimization (e.g., early stopping~\citep{burns2023weak,yao2025understanding} to maintain moderate $\dist(G^{sw}, G^w)$) combined with a higher-quality weak model (reducing $\dist(G^w, G^\star)$) can maximize the strong student's potential. 
Nevertheless, the strong model's generalization error remains fundamentally bounded by $\dist(G^{sw}, G^w)-\cO \left( \sqrt{\dist(G^{sw}, G^w)} \right)$, reflecting an intrinsic limitation of both $f$-divergence losses and the cross-entropy loss commonly employed in W2SG frameworks~\citep{burns2023weak,yao2025revisiting}.
This barrier persists in current W2SG frameworks unless additional assumptions are introduced, such as convexity~\citep{charikar2024quantifying,mulgund2025relating,yao2025revisiting} and Gaussian data distributions~\citep{ildiz2025highdimensional,wu2024provable,somerstep2025transfer}.

To broaden the scope of our theory, we extend the above results to the ERM framework in~\cref{eqn:erm}, where the strong model is optimized using $n$ i.i.d. samples labeled by a weak supervisor.

\begin{corollary}[Proved in~\cref{proof:upper_lower_fdiv_sample}] \label{theorem:upper_lower_fdiv_sample}
Given the data domain $\cX$, output domain $\cY$, models $G^w,G^{sw},G^\star$ and disagreement $\dist(\cdot, \cdot)$ defined above.
For any strong model $\hat{G}^{sw}=\hat{g}^{sw} \circ h^s$, with probability at least $1-\delta$ over $n$ i.i.d. samples, there holds
\resizebox{\linewidth}{!}{
\begin{minipage}{1.05\linewidth}
\begin{multline} \label{eq:coro}
    \left| \dist(\hat{G}^{sw}, G^\star) - \dist(G^w, G^\star) \right| = \\ \cO \left( \sqrt{\disthat(\hat{G}^{sw}, G^w)} \right) + \cO \left(\frac{\cC_{\cF_s}}{n}\right)^{\frac{1}{4}} + \cO \left(\frac{\log \frac{1}{\delta}}{n}\right)^{\frac{1}{4}},
\end{multline}
\end{minipage}
}
where $\cC_{\cF_s}$ is a constant capturing the complexity of the function class $\cF_s$, and the asymptotic notation is with respect to $n \to \infty$.
\end{corollary}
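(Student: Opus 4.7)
The plan is to specialize Theorem~\ref{theorem:upper_lower_fdiv} to the empirical risk minimizer $\hat{G}^{sw}$ and then replace the population disagreement $\dist(\hat{G}^{sw}, G^w)$ by its empirical counterpart $\disthat(\hat{G}^{sw}, G^w)$ using a standard uniform-convergence argument over the hypothesis class $\{g \circ h^s : g \in \cF_s\}$. The fourth-root exponents in the sample-dependent terms of~\cref{eq:coro} originate from the square root already present in Theorem~\ref{theorem:upper_lower_fdiv}, compounded with the usual $n^{-1/2}$ rate of uniform convergence.

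Concretely, applying Theorem~\ref{theorem:upper_lower_fdiv} with $G^{sw} := \hat{G}^{sw}$ gives $|\dist(\hat{G}^{sw}, G^\star) - \dist(G^w, G^\star)| = \cO(\sqrt{\dist(\hat{G}^{sw}, G^w)})$, so it is enough to control $\dist(\hat{G}^{sw}, G^w)$ by $\disthat(\hat{G}^{sw}, G^w)$ plus a sample-complexity remainder. Using a Rademacher complexity (or covering-number) argument, together with the fact that the softmax layer restricts outputs to a bounded sub-simplex so that $f$ evaluated on model outputs is bounded and Lipschitz, I would establish, with probability at least $1-\delta$,
\begin{equation*}
    \sup_{g \in \cF_s} \bigl| \dist(g \circ h^s, G^w) - \disthat(g \circ h^s, G^w) \bigr| \leq \cO\!\left(\sqrt{\tfrac{\cC_{\cF_s}}{n}}\right) + \cO\!\left(\sqrt{\tfrac{\log(1/\delta)}{n}}\right),
\end{equation*}
where $\cC_{\cF_s}$ collects the complexity of $\cF_s$ (composed with the fixed representation $h^s$) and the Lipschitz constant of $f$ on the relevant output region. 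Specializing this supremum to $\hat{g}^{sw}$, plugging the resulting inequality into the bound obtained from Theorem~\ref{theorem:upper_lower_fdiv}, and applying the elementary inequality $\sqrt{a+b+c} \leq \sqrt{a}+\sqrt{b}+\sqrt{c}$ to separate the empirical-risk term from the sample-complexity terms yields the claimed fourth-root rates.

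The main obstacle is the uniform-convergence step: several canonical $f$-divergences (notably KL, reverse KL, and Jeffreys) blow up when one of the input probabilities approaches $0$, so a naive symmetrization argument produces unbounded constants. The softmax assumption $0 < y_i < 1$ from the preliminaries circumvents this only if one can quantify an explicit lower bound on the output coordinates; otherwise one must invoke a truncation or a sub-exponential concentration bound to handle the heavy-tailed regime. Once this boundedness and Lipschitz property are secured, the remainder of the argument—composition with the fixed representation $h^s$, Talagrand-style contraction of Rademacher averages, and McDiarmid's inequality—is routine and produces exactly the constant $\cC_{\cF_s}$ appearing in the corollary.
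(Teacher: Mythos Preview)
Your proposal is correct and follows essentially the same route as the paper: the paper likewise instantiates Theorem~\ref{theorem:upper_lower_fdiv} at $\hat{G}^{sw}$, invokes a Rademacher-complexity uniform-convergence lemma (using boundedness and Lipschitzness of the $f$-divergence loss on the softmax outputs together with Talagrand contraction) to pass from $\dist(\hat{G}^{sw},G^w)$ to $\disthat(\hat{G}^{sw},G^w)$ up to $\cO(\sqrt{\cC_{\cF_s}/n})+\cO(\sqrt{\log(1/\delta)/n})$, and then applies $\sqrt{a+b+c}\le\sqrt{a}+\sqrt{b}+\sqrt{c}$ to obtain the fourth-root rates. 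Your caveat about needing a quantitative lower bound on the softmax coordinates to secure boundedness and Lipschitzness is well taken; the paper simply assumes this holds on the model output space without further discussion.
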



The first term on the right-hand side of~\cref{eq:coro} is determined by the training error $\disthat(\hat{G}^{sw}, G^w)$ in W2SG, which we expect to be small in practice.
The remaining two terms represent the key differences between our result and~\cref{eq:thm}. 
$\frac{\cC_{\cF_s}}{n}$ depends on the complexity of the fine-tuning function class $\cF_s$, while both this term and the confidence term $\frac{\log \frac{1}{\delta}}{n}$ vanish as the number of training samples $n \to \infty$.

Our theoretical analysis reveals an important limitation of W2SG: \textit{when allowing the strong model to overfit to weak supervision with unlimited training data, its performance converges to that of the weak model}, thereby failing to fully exploit the strong model's capacity.
This theory further offers theoretical justification for employing early stopping as a means to mitigate overfitting in initial considerations of W2SG~\citep{burns2023weak}.

To further systematically capture the stochastic nature of model optimization and data sampling in W2SG, we adopt an information-theoretic generalization analysis~\citep{xu2017information,bu2020tightening,wang2022information,tang2023information}.
This theoretical paradigm quantifies the impact of the stochastic factors by measuring the dependence between learned parameters and training data.

\begin{theorem}[Proved in~\cref{proof:inf_theoretic}] \label{corollary:information}
Given the data domain $\cX$, output domain $\cY$, models $G^w, G^\star$ and disagreement $\dist(\cdot, \cdot)$ defined above. 
Let $Z=(X_1, \cdots, X_n)$ and $\theta$ be the parameters of the strong model $\hat{G}_\theta^{sw}$.
For any $\hat{G}^{sw}_\theta$, there holds
\resizebox{\linewidth}{!}{
\begin{minipage}{1.05\linewidth}
\begin{multline} \label{eq:infor}
    \left| \mathbb{E}_{\theta,Z} \left[ \dist(\hat{G}^{sw}_\theta,G^\star) - \dist(G^w,G^\star) \right] \right| = \\ \cO \left(\sqrt{\mathbb{E}_{\theta,Z} \left[ \disthat(\hat{G}^{sw}_\theta, G^w) \right]}\right) + \cO \left(\frac{I(\theta;Z)}{n} \right)^{\frac{1}{4}}.
\end{multline}
\end{minipage}
}
\end{theorem}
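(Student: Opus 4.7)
The plan is to combine the pointwise bound of \cref{theorem:upper_lower_fdiv} with a classical information-theoretic generalization bound in the style of \citet{xu2017information}. At the population level, \cref{theorem:upper_lower_fdiv} gives, for every realization of $\theta$,
\begin{equation*}
\bigl| \dist(\hat{G}^{sw}_\theta, G^\star) - \dist(G^w, G^\star) \bigr| = \cO\!\left( \sqrt{\dist(\hat{G}^{sw}_\theta, G^w)} \right).
\end{equation*}
Taking expectations over the joint law of $(\theta, Z)$, moving the expectation inside the absolute value (its own Jensen step), and then applying Jensen's inequality to the concave map $u \mapsto \sqrt{u}$ yields
\begin{equation*}
\bigl| \bE_{\theta,Z}[\dist(\hat{G}^{sw}_\theta, G^\star) - \dist(G^w, G^\star)] \bigr| = \cO\!\left( \sqrt{\bE_{\theta,Z}\,\dist(\hat{G}^{sw}_\theta, G^w)} \right).
\end{equation*}
This isolates the only nontrivial quantity, namely the population risk $\bE_{\theta,Z}[\dist(\hat{G}^{sw}_\theta, G^w)]$, against which the weak supervisor is to be measured.

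Next I would invoke the information-theoretic generalization bound: assuming the per-sample $f$-divergence loss $f(G^w(X)/\hat{G}^{sw}_\theta(X))$ is $\sigma$-subgaussian under the data distribution (which holds, for instance, whenever the softmax outputs are bounded away from $0$ and $1$, so that all seven divergences in \cref{table:fdiv} are bounded), the mutual-information bound of \citet{xu2017information} gives
\begin{equation*}
\bE_{\theta,Z}\bigl[\dist(\hat{G}^{sw}_\theta, G^w) - \disthat(\hat{G}^{sw}_\theta, G^w)\bigr] \leq \sqrt{\frac{2\sigma^2\, I(\theta;Z)}{n}}.
\end{equation*}
Substituting this into the previous display and applying the elementary inequality $\sqrt{a+b} \leq \sqrt{a} + \sqrt{b}$ for $a,b \geq 0$ splits the square root into a training-error term $\cO(\sqrt{\bE[\disthat]})$ and a residual $\cO((I(\theta;Z)/n)^{1/4})$, matching the right-hand side of \cref{eq:infor}.

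The main obstacle is verifying the subgaussian hypothesis uniformly over the $f$-divergence family, since certain choices (notably KL and reverse KL) can blow up when one output coordinate approaches zero. I would address this either by absorbing a boundedness assumption on the softmax outputs into the constants (which the $\cO(\cdot)$ notation already hides), or by stating the result for a uniformly bounded surrogate of $f$, noting that in practice the softmax temperature and the numerical stabilization used during fine-tuning already impose such a bound. A minor subtlety is that the bound in \cref{theorem:upper_lower_fdiv} carries a constant that depends on $f$ through its modulus of convexity at $1$; this constant is deterministic and is simply carried through the expectation without affecting the asymptotic rate.
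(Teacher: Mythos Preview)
Your proposal is correct and follows essentially the same route as the paper: first take expectations in \cref{theorem:upper_lower_fdiv} and apply Jensen to get $\bigl|\bE_{\theta,Z}[\dist(\hat G^{sw}_\theta,G^\star)-\dist(G^w,G^\star)]\bigr|=\cO(\sqrt{\bE_{\theta,Z}\dist(\hat G^{sw}_\theta,G^w)})$, then use the mutual-information generalization bound (the paper cites the equivalent change-of-measure lemma from \citet{wang2022information} rather than \citet{xu2017information} directly) to control $\bE[\dist-\disthat]$ by $\sqrt{I(\theta;Z)/n}$, and finally split via $\sqrt{a+b}\le\sqrt a+\sqrt b$. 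Your worry about subgaussianity is resolved in the paper exactly as you suggest: the standing assumption $0<y_i<1$ on the softmax outputs makes each $f$-divergence bounded, and Hoeffding's lemma then supplies the subgaussian constant.
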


\begin{proofsketch}
    We leverage information-theoretic bounds from~\citep{wang2022information} and properties of mutual information to establish our final results.
\end{proofsketch}

The expectation represents the theoretical average. In this context, $\mathbb{E}_{\theta,Z}$ denotes the expected value computed over both the model parameters $\theta$ and the training dataset $Z$. This expectation captures the average behavior across all possible realizations, characterizing the fundamental stochastic nature of both model optimization and data sampling.
In the W2SG framework, it is critical to prevent the strong model from overfitting to weak supervision~\citep{burns2023weak}. Specifically, the model parameters $\theta$ should not encode excessive information about the finite training dataset, meaning the mutual information $I(\theta;Z)$ must remain constrained. This aligns with information-theoretic generalization analysis~\citep{russo2016controlling,xu2017information} and information bottleneck principle~\citep{tishby2015deep,shwartz2017opening}, which similarly advocate for controlling mutual information to ensure better generalization.
Moreover, we can draw a conclusion analogous to~\cref{theorem:upper_lower_fdiv_sample}: under the conditions that (1) there are infinite training data ($n \to \infty$), and (2) the strong model exhibits perfect overfitting to the weak supervision (i.e., $\disthat(\hat{G}^{sw}_\theta, G^w)$ asymptotically goes to zero), then the strong model’s performance degenerates to that of the weak model.
Consequently, this restricts the achievable performance gains in W2SG. To mitigate this issue, it is essential to maintain a non-trivial disagreement between the strong and weak models while ensuring an appropriate optimization process that prevents overfitting, as highlighted in prior work~\citep{burns2023weak,yao2025revisiting}.


\begin{figure*}[t]
    \centering
    \begin{subfigure}[b]{\textwidth}  
        \includegraphics[width=\linewidth]{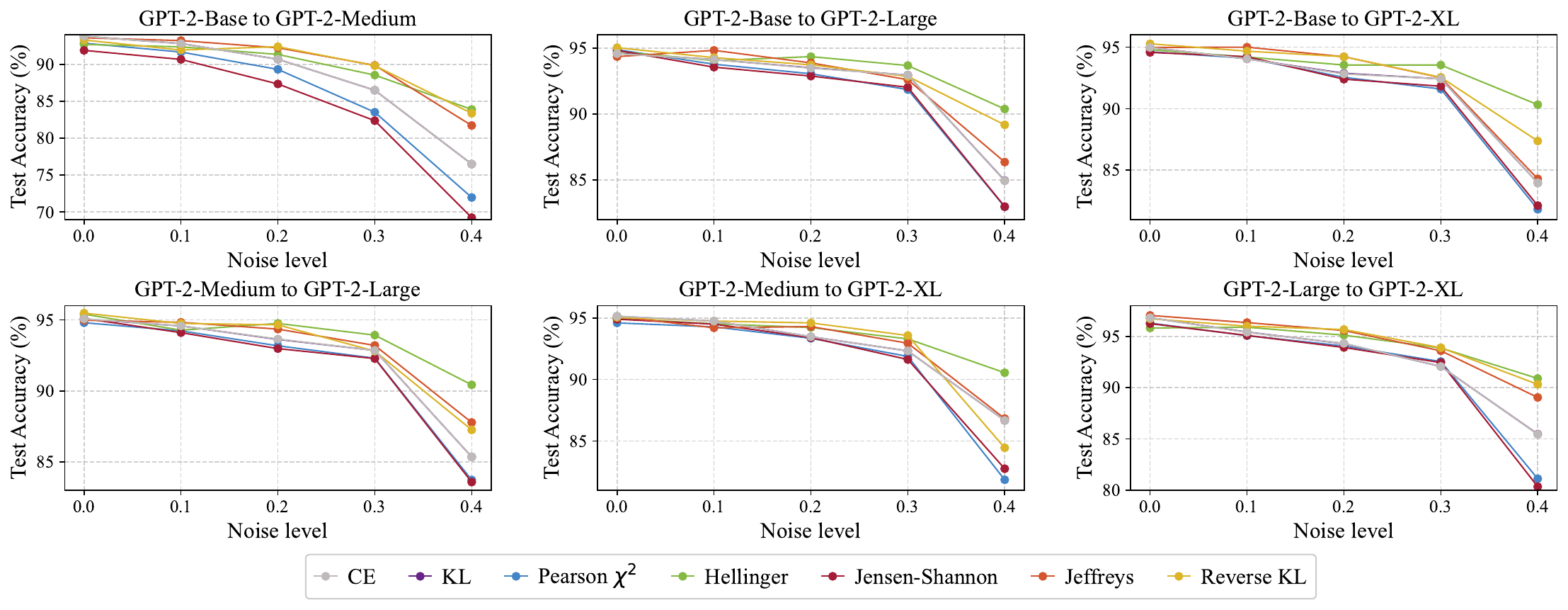}
        \caption{Results of GPT-2-series on CAI-Harmless}
        \label{fig:cai-noise}
    \end{subfigure}
    \bigskip  
    \begin{subfigure}[b]{\textwidth}  
        \includegraphics[width=\linewidth]{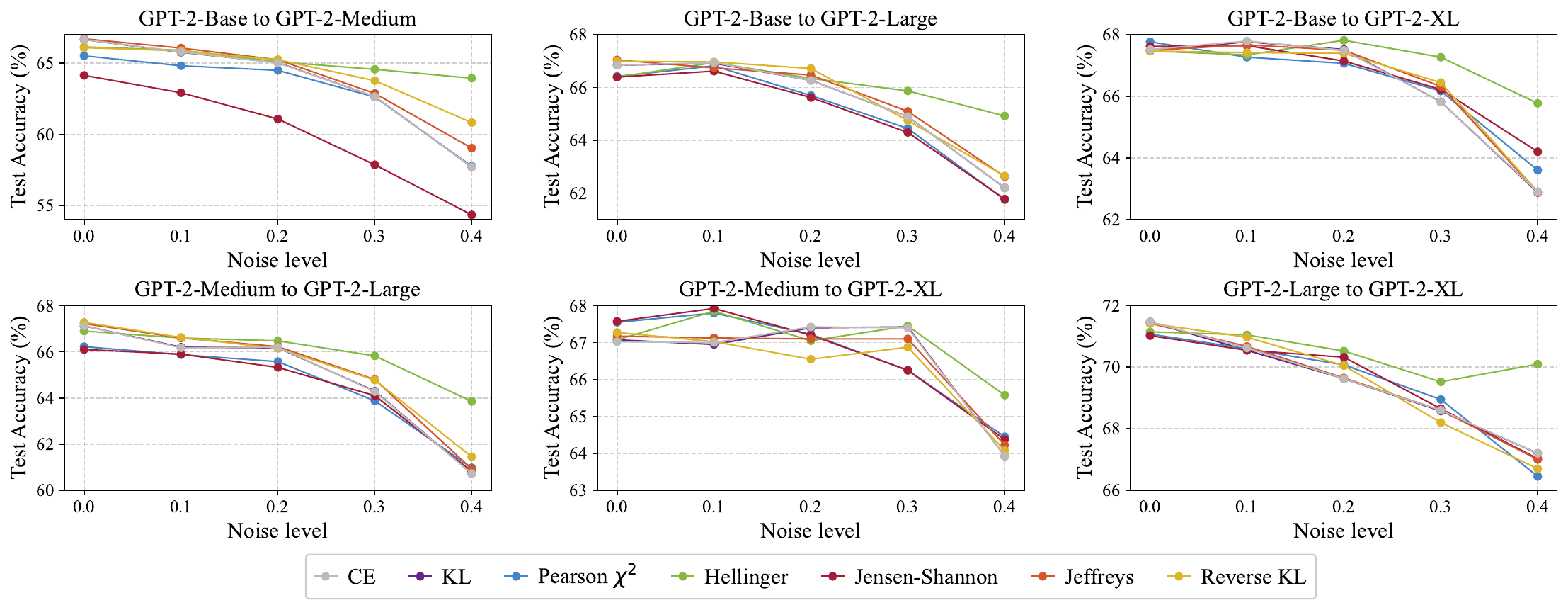}
        \caption{Results of GPT-2-series on HH-RLHF}
        \label{fig:helpful-noise}
    \end{subfigure}
    \vspace{-30pt}
    \caption{Results of GPT-2-series across varying noise levels on CAI-Harmless and HH-RLHF. 
    ``GPT-2-Base to GPT-2-Medium'' represents GPT-2-Base supervising GPT-2-Medium. 
    Figure legend from left to right: CE loss, KL divergence, Pearson $\chi^{2}$ divergence, squared Hellinger distance, JS divergence, Jeffreys divergence and reverse KL divergence.
    The results of KL divergence is not shown because it overlaps with CE. 
    }
    \label{fig:noise-comparison}
    \vspace{-12pt}
\end{figure*}

\subsection{Equivalence of $f$-Divergence Losses with Regularization}

As highlighted by~\citet{burns2023weak}, we explore an alternative approach called ``Auxiliary Confidence Loss'': an additional regularization term designed to enhance the strong model’s confidence in its predictions.
Let $\hat{G}_\theta^t$ be the hardened strong model predictions using a threshold $t$, i.e., for any $x$: $\hat{G}_\theta^t(x)= \mathbb{I}(G^{sw}_\theta(x) > t) \in \{ 0,1 \}$,
where $\mathbb{I}(\cdot)$ is the indicator function.
Then the regularization term is formulated as the cross-entropy loss between the predictions of $\hat{G}_\theta^t$ and $G_\theta^{sw}$ over the training set.
This regularization serves to mitigate overfitting to weak supervision, thereby improving the overall performance of the strong model~\citep{burns2023weak}.
In the subsequent theoretical analysis, we demonstrate that minimizing an $f$-divergence loss with this regularization is equivalent to minimizing another $f$-divergence loss with an appropriately transformed regularization term.

\begin{theorem}[Proved in~\cref{proof:equivalence}] \label{theorem:equivalence}
Let $f_1$ and $f_2$ be two strictly convex and differentiable functions satisfying the conditions of $f$-divergence in~\cref{def:fdiv}. Then there holds
\resizebox{\linewidth}{!}{
\begin{minipage}{1.05\linewidth}
\begin{multline*}
\min_{\theta} \hat{R}_{f_1} \left(G^{sw}_\theta, G^w\right) + \alpha \cdot \expect_\theta \left[ \frac{1}{n} \sum_{i=1}^n L(\theta, x_i) \right] = \\ \min_{\theta} \hat{R}_{f_2} \left(G^{sw}_\theta, G^w\right) + \alpha \cdot \expect_\theta \left[ \frac{1}{n} \sum_{i=1}^n v_i\left( L(\theta, x_i) \right) \right],
\end{multline*}
\end{minipage}
}
where $\alpha>0$ is the regularization strength, the regularization $L(\theta, x_i)$ is the cross-entropy loss between the prediction probability distribution of $\hat{G}_\theta^t(x_i)$ and $G^{sw}_\theta(x_i)$, and $v_i(\cdot)$ is a proper transformation function.
\end{theorem}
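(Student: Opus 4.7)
The plan is to prove the claim pointwise in $\theta$ by constructing the transformation $v_i$ sample-by-sample, after which equality of the minima (and of the outer expectation over $\theta$) follows by linearity of expectation and substitution. The structural observation driving the construction is that in the binary W2SG setting of \citet{burns2023weak}, both the summand $D_{f_j}(G^{sw}_\theta(x_i) \| G^w(x_i))$ of $\hat R_{f_j}$ and the regularization $L(\theta, x_i)$ depend on $\theta$ only through the scalar $p_i := G^{sw}_\theta(x_i)$, so the entire equivalence reduces to a per-sample scalar change of variables.

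First, I would analyze $L(\theta, x_i) = \mathrm{CE}(\hat G^t_\theta(x_i), p_i)$. Since $\hat G^t_\theta(x_i) = \mathbb{I}(p_i > t) \in \{0,1\}$, the regularization splits into two smooth, strictly monotone branches: $L = -\log(1-p_i)$ on $[0,t]$ and $L = -\log p_i$ on $(t,1]$, each admitting a continuous branch inverse $\ell_i^{-1}$ recovering $p_i$ from $L$. For each $i$, define
\[
\phi_i(p) \;=\; D_{f_1}\!\bigl(p \,\|\, G^w(x_i)\bigr) \;-\; D_{f_2}\!\bigl(p \,\|\, G^w(x_i)\bigr),
\]
and set $v_i(L) \;=\; L \;+\; \tfrac{1}{\alpha}\,\phi_i\!\bigl(\ell_i^{-1}(L)\bigr)$ branch-wise. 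A direct substitution yields the pointwise identity
\[
\alpha\, v_i\!\bigl(L(\theta,x_i)\bigr) \;=\; \alpha\, L(\theta,x_i) \;+\; D_{f_1}(p_i \| G^w(x_i)) \;-\; D_{f_2}(p_i \| G^w(x_i)),
\]
and averaging over $i = 1, \dots, n$ shows that the $f_1$- and $f_2$-regularized empirical objectives agree as functions of $\theta$. Applying $\mathbb{E}_\theta$ and then $\min_\theta$ to both sides then preserves this equality.

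The main obstacle will be that $L(\cdot, x_i)$ is two-to-one as a function of $p_i$ — the ranges of its two branches $[0, -\log(1-t)]$ and $[0, -\log t)$ overlap, so $v_i$ is not single-valued as a function of $L$ alone. I would resolve this by partitioning the parameter space by the sign pattern $(\mathbb{I}(p_i > t))_{i=1}^n$ and constructing $v_i$ separately on each piece, which is consistent with reading the "proper transformation" $v_i$ as conditioned on the hardened label $\hat G^t_\theta(x_i)$ (this label is already implicitly fixed in both objectives via the regularization term itself). Continuity of $\phi_i$ at the branch boundary $p_i = t$, needed so that the construction is regular across the threshold, follows immediately from the strict convexity and differentiability hypotheses on $f_1$ and $f_2$. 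A minor secondary check is that $\phi_i$ remains finite on the open interval $(0,1)$, which holds because $G^w(x_i) \in (0,1)$ ensures the likelihood ratios stay bounded away from zero and infinity.
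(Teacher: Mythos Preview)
Your proposal is correct and takes a genuinely different route from the paper. The paper follows~\citet{daunas2024equivalence}: it reparametrizes each per-sample term by the Radon--Nikodym derivative $g_i = dP_i/dQ_i$, sets up a Lagrangian with the normalization constraints $\int g_i\, dQ_i = 1$, computes the Gateaux differential to obtain the stationarity condition $f_1'(g_i) = -\alpha L_i - n\beta_i$, and then defines the transformation explicitly as $v_i(x) = -\tfrac{1}{\alpha}f_2'\bigl((f_1')^{-1}(-\alpha x - nN_{Q,i}(\alpha))\bigr) - \tfrac{n}{\alpha}N'_{Q,i}(\alpha)$, verifying that the $f_2$-problem with this $v_i$ has the \emph{same minimizer} $\tilde g_i$. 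Your argument instead inverts the per-branch map $p_i \mapsto L(\theta,x_i)$ and absorbs the difference $\phi_i(p) = D_{f_1}(p\|G^w(x_i)) - D_{f_2}(p\|G^w(x_i))$ directly into the regularizer, producing a pointwise identity between the two full objectives at every $\theta$ --- which is nominally stronger (equality of objectives, not just of minimizers) and stays in the original $\theta$-parametrization rather than relaxing to arbitrary likelihood ratios $g_i$. The trade-off is that the paper's $v_i$ is a single closed-form function of the loss value built from $(f_1')^{-1}$ and normalization constants, whereas your $v_i$ must be defined branch-wise and so implicitly conditions on the hardened label $\hat G^t_\theta(x_i)$; you flag this honestly, and since the statement only demands a ``proper transformation'' it is defensible, but the paper's variational construction sidesteps the two-to-one obstacle entirely.
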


\begin{proofsketch}
    Our proof is motivated by recent advances in information theory~\citep{daunas2024equivalence}.
    We employ the properties of $f$-divergence, Gateaux differential, and Lagrange multiplier to analyze the Radon-Nikodym derivative and establish the result.
\end{proofsketch}

The $f$-divergence loss is equivalent to a different $f$-divergence loss with an appropriate transformation of the confidence-enhancing regularizer.
This result indicates the consistency of the target of $f$-divergence loss after the regularization is added.
However, this proof relies on the assumption of optimal transformation functions. In practice, hyperparameter tuning for $\alpha$ may be non-trivial across different divergence choices, potentially leading to deviations from the theoretical predictions.

\section{Experiments}

\begin{figure*}[t]
    \centering
    \begin{subfigure}[b]{\linewidth}  
        \centering
        \includegraphics[width=\linewidth]{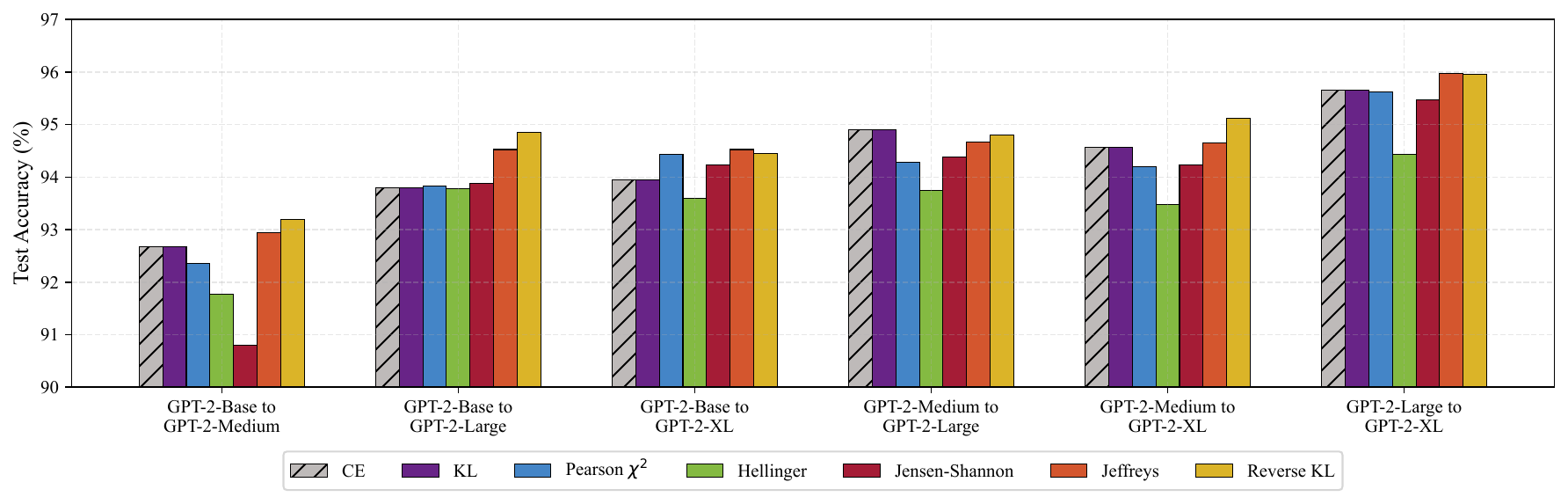}
        \caption{Results of GPT-2-series on CAI-Harmless}
        \label{fig:cai-logconf}
    \end{subfigure}
    
    \bigskip  
    
    \begin{subfigure}[b]{\linewidth}
        \centering
        \includegraphics[width=\linewidth]{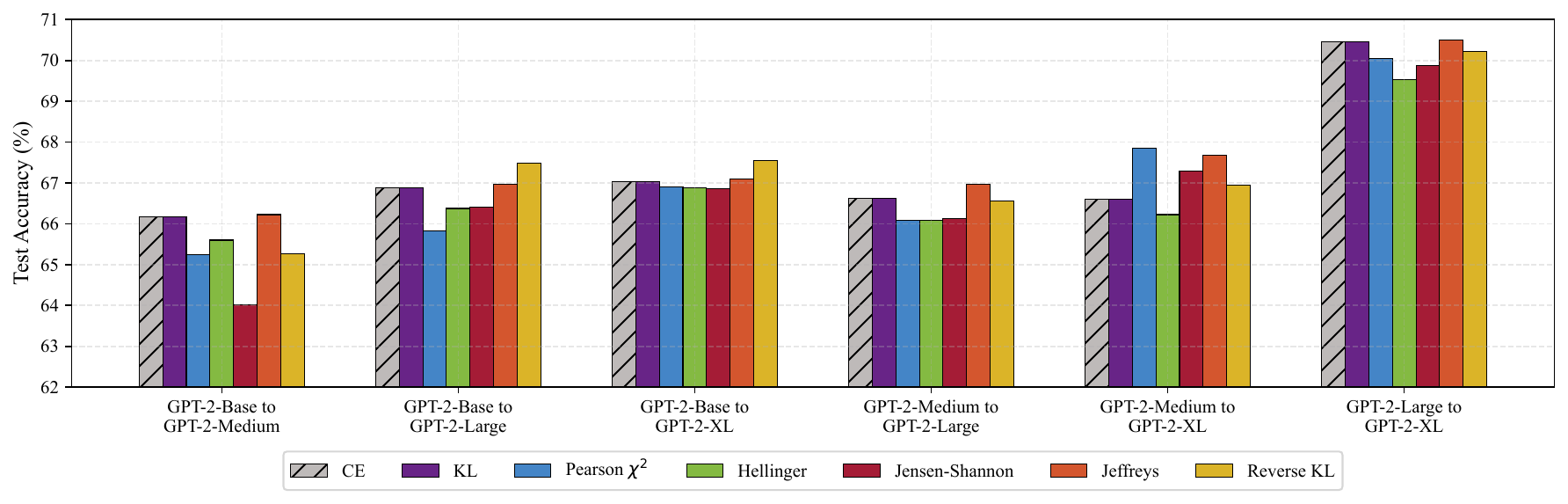}
        \caption{Results of GPT-2-series on HH-RLHF}
        \label{fig:helpful-logconf}
    \end{subfigure}
    \caption{Performance comparison of six $f$-divergence variants with confidence-enhancing regularization on CAI-Harmless and HH-RLHF, using CE as baseline.}
    \label{fig:logconf-comparison}
    \vspace{-10pt}
\end{figure*}

\begin{table*}
\centering
\caption{Results of seven losses under 50\% label noise. 
``Base $\rightarrow$ Medium'' represents GPT-2-Base supervising GPT-2-Medium.
The best and runner-up results are highlighted in \textbf{bold} and \underline{underlined} text, respectively.}
\vspace{-2pt}
\label{tab:divergence_comparison}
\begin{tabular}{llccccccc}
\toprule
\textbf{Dataset} & \textbf{Models} & \textbf{$\chi^2$ } & \textbf{Hellinger} & \textbf{Jeffreys} & \textbf{JS} & \textbf{RKL} & \textbf{KL} & \textbf{CE} \\
\midrule
\multirow{6}{*}{CAI-Harmless} 
& Base $\rightarrow$ Medium &  \underline{44.18} & 37.50 & 34.88 & \textbf{44.93} & 30.18 & 40.38 & 40.38 \\
& Base $\rightarrow$ Large & \textbf{34.73} & 30.70 & 31.38 &  \underline{34.63} & 27.60 & 31.95 & 31.95 \\
& Base $\rightarrow$ XL &  \underline{33.20} & 31.05 & 30.93 & \textbf{33.70} & 27.15 & 32.30 & 32.58 \\
& Medium $\rightarrow$ Large &  \underline{34.58} & 28.48 & 29.88 & \textbf{34.60} & 30.00 & 32.08 & 32.08 \\
& Medium $\rightarrow$ XL & 34.40 & 26.35 & 30.28 & \textbf{33.55} & 29.80 &  \underline{31.53} &  \underline{31.53} \\
& Large $\rightarrow$ XL &  \underline{31.18} & 27.90 & 27.25 & 31.10 & \textbf{32.10} & 30.73 & 30.78 \\
\cmidrule(lr){1-9}
\multirow{6}{*}{HH-RLHF} 
& Base $\rightarrow$ Medium & 49.48 & 47.60 & 49.85 & 49.63 & \textbf{50.08} &  \underline{50.00} &  \underline{50.00} \\
& Base $\rightarrow$ Large &  \underline{51.45} & \textbf{55.73} & 50.70 & 52.00 & 50.63 & 50.30 & 50.33 \\
& Base $\rightarrow$ XL &  \underline{54.30} & \textbf{59.68} & 51.98 & 53.85 & 52.05 & 51.08 & 51.10 \\
& Medium $\rightarrow$ Large &  \underline{54.65} & 54.08 & 52.25 & \textbf{55.05} & 52.93 & 51.73 & 51.73 \\
& Medium $\rightarrow$ XL & 55.45 & \textbf{58.45} & 57.03 & 55.50 &  \underline{57.08} & 56.15 & 56.18 \\
& Large $\rightarrow$ XL & 51.65 & \textbf{51.73} & 50.15 &  \underline{51.68} & 49.03 & 51.23 & 51.18 \\
\bottomrule
\end{tabular}
\vspace{-10pt}
\end{table*}

To study different $f$-divergences, we employ six $f$-divergences (KL, Reverse KL, Jensen-Shannon, Jeffreys, Pearson $\chi^{2}$ divergence, and Squared Hellinger distance) along with cross-entropy as loss functions in W2SG.
We avoid using TV distance as an optimization objective due to its reliance on absolute values, which can lead to non-smooth optimization landscapes.
Our experimental configuration mainly follows the setup in prior works~\citep{yang2024super,yao2025revisiting}.

\subsection{Experimental Setup}

\paragraph{Datasets.}
Our experiments primarily focus on reward modeling tasks. 
We follow~\citet{yang2024super,yao2025revisiting} and use two binary classification datasets: CAI-Harmless~\citep{bai2022constitutional} and HH-RLHF~\citep{bai2022training}.
For each dataset, we construct paired examples by directly aligning the chosen and rejected responses while preserving their original preference labels.
To facilitate different stages of training and evaluation, each dataset is partitioned into three subsets of equal size (4K examples): (1) ground truth set for weak teacher model training, (2) weak supervision set for strong student model training, and (3) test set for final performance evaluation. All subsets are class-balanced to ensure fair comparisons.

\paragraph{Models.} 
We conduct experiments on GPT-2-series~\citep{radford2019language}, including GPT-2-Base, GPT-2-Medium, GPT-2-Large, and GPT-2-XL. 
We append a linear projection layer to the output of the pretrained model, followed by a Sigmoid activation that maps the representation to a probability in $[0, 1]$, indicating whether the response satisfies the harmlessness or helpfulness criteria. Notably, we fine-tune all model parameters without freezing any layers, enabling the model to fully adapt to the reward modeling tasks. 
To mitigate overfitting, we train for only a single epoch~\citep{burns2023weak}. 
The learning rate is $1 \times 10^{-5}$, with a batch size of 16 and a maximum input sequence length of 512. 

\paragraph{Label noise.}
To investigate the robustness of different loss functions to label noise, we introduce controlled noise by inverting the soft labels of a portion of the training samples. 
Specifically, for each selected sample, the original soft label $y$ is replaced with its complement $1-y$. 
We compare model performance under five noise levels (0.1, 0.2, 0.3, 0.4, and 0.5) with the clean data baseline. 
Notably, when the noise level is particularly high (about 0.5), the pseudo-labels provided by the weak model adversely affect the optimization, potentially leading to a performance even worse than random-guessing.

\paragraph{Confidence-enhancing regularization.} 
Following previous wisdom~\citep{burns2023weak}, we also investigate the regularized loss:
\begin{multline*} \label{aux_loss}
\mathcal{L}_{\text{AUX}} = \beta \, \text{CE}(G^w(x), G^{sw}(x)) + \\ (1 - \beta) \, \text{CE}(G^{sw}(x), \hat{G}^t(x)),
\end{multline*}
where $\hat{G}^t(x)$ denotes the hardened prediction of the student model. 
This auxiliary term is introduced to encourage stronger confidence in the student’s predictions and to prevent it from fully mimicking the weak teacher.
We further generalize this framework by replacing CE with six different $f$-divergences to comprehensively evaluate their effects under confidence-enhancing regularization.
All experiments employ a fixed regularization coefficient $\beta=0.5$ following~\citep{burns2023weak}, with $\beta$ warm-up during the first 50\% of training iterations.

\subsection{Main Results}

\paragraph{Clean data.} 
In the absence of introduced label noise (noise level is 0.0 in~\cref{fig:cai-noise} and~\cref{fig:helpful-noise}), $f$-divergence losses generally demonstrate comparable performance to CE loss. 
Specifically, both Jeffreys divergence (the sum of forward and reverse KL divergences) and reverse KL divergence, which incorporate the reverse KL term, are more likely to outperform CE loss. 
This finding aligns with prior work highlighting the advantages of reverse KL divergence in W2SG~\citep{yao2025revisiting}. 
However, JS divergence shows marginally worse performance, suggesting it may be less suitable for this setting.

\paragraph{Moderate noise.}
As noise level increases (noise level: 0.1, 0.2, 0.3, 0.4), CE loss performance degrades significantly. 
In contrast, Pearson $\chi^2$ divergence, Hellinger distance, Jeffreys divergence, and reverse KL outperform CE loss in many cases, with some maintaining nearly stable performance despite increasing noise. 
Notably, Hellinger distance emerges as particularly robust, achieving top performance across all noise levels and demonstrating superiority over both CE loss~\citep{burns2023weak} and reverse KL divergence~\cite{yao2025revisiting} in previous work. 
Hellinger distance sometimes even outperforms the runner-up loss by as much as 3\%.
The superior performance of Hellinger distance may stem from its relatively weaker mode-seeking behavior compared to other $f$-divergences~\citep{li2023mode}, which reduces its sensitivity to incorrect predictions.
Hellinger distance's bounded nature also makes it inherently less sensitive to outliers compared to unbounded losses like CE.

\paragraph{Extreme noise.}
As demonstrated in~\cref{tab:divergence_comparison}, under extreme label noise conditions (noise level is 0.5) where all loss functions degrade to near or even below random-guessing accuracy, we observe several noteworthy phenomena:
(1) Both Pearson $\chi^2$-divergence and JS divergence consistently outperform CE by a margin of at least 1\%, demonstrating superior robustness to severe label corruption.
(2) The Hellinger distance sometimes even achieve an improvement of up to 8\% over CE in certain cases. 
This significant advantage suggests that the Hellinger distance possesses unique properties that make it exceptionally resilient to some extreme noise conditions.
These empirical results suggest the advantages of $f$-divergences over CE for robust learning in high-noise regimes.

\subsection{Ablation Study}
As demonstrated in~\cref{fig:logconf-comparison}, reverse KL divergence and Jeffreys divergence consistently achieve slightly higher accuracy than CE loss in most scenarios. 
This observation aligns with the findings of~\citet{yao2025revisiting}, who demonstrated the particular advantages of reverse KL divergence  for confidence-enhancing regularization tasks.
While Hellinger distance shows superior performance in label noise settings, its benefits become less pronounced in this scenario. This is because the additional confidence-enhancing regularization effectively mitigates potential label noise, thereby limiting the relative improvement offered by Hellinger distance.

\section{Conclusion}

In this paper, we present a novel information-theoretic framework for W2SG using $f$-divergence, demonstrating its effectiveness in improving model performance and noise tolerance while maintaining computational efficiency. 
Our theoretical analysis reveals fundamental limitations in W2SG, showing that the strong model's performance is intrinsically bounded by the weak supervisor's quality and the disagreement between models. 
We also establish the equivalence of different $f$-divergence losses under confidence-enhancing regularization.
Empirical results highlight the superiority of $f$-divergence losses in handling noisy supervision scenarios.
This work provides both theoretical insights and practical tools for advancing W2SG, offering a principled approach to leveraging weak supervision for training stronger models.

\newpage

\section*{Limitations}

While our work provides a principled framework for W2SG using $f$-divergence, several limitations warrant discussion.
First, while the theoretical advances and insights are valuable,~\cref{theorem:upper_lower_fdiv_sample} relies on asymptotic sample complexity bounds, which may not be sufficiently tight for extremely large language models. It is worth noting, however, that this limitation is not unique to our analysis but rather reflects a broader challenge shared by most theoretical frameworks in deep learning, particularly those applied to large language models.
Second, although we demonstrate the versatility of $f$-divergence through theoretical unification, our empirical validation is limited to classification tasks. The effectiveness of this approach for generative tasks remains an open question.


\bibliography{custom}

\begin{thebibliography}{66}
\providecommand{\natexlab}[1]{#1}

\bibitem[{Agrawal et~al.(2024)Agrawal, Ding, Che, Deng, Satheesh, Langford, and
  Huang}]{agrawal2024ensemw2s}
Aakriti Agrawal, Mucong Ding, Zora Che, Chenghao Deng, Anirudh Satheesh, John
  Langford, and Furong Huang. 2024.
\newblock Ensemw2s: Can an ensemble of llms be leveraged to obtain a stronger
  llm?
\newblock \emph{arXiv preprint arXiv:2410.04571}.

\bibitem[{Bai et~al.(2022{\natexlab{a}})Bai, Jones, Ndousse, Askell, Chen,
  DasSarma, Drain, Fort, Ganguli, Henighan et~al.}]{bai2022training}
Yuntao Bai, Andy Jones, Kamal Ndousse, Amanda Askell, Anna Chen, Nova DasSarma,
  Dawn Drain, Stanislav Fort, Deep Ganguli, Tom Henighan, et~al.
  2022{\natexlab{a}}.
\newblock Training a helpful and harmless assistant with reinforcement learning
  from human feedback.
\newblock \emph{arXiv preprint arXiv:2204.05862}.

\bibitem[{Bai et~al.(2022{\natexlab{b}})Bai, Kadavath, Kundu, Askell, Kernion,
  Jones, Chen, Goldie, Mirhoseini, McKinnon et~al.}]{bai2022constitutional}
Yuntao Bai, Saurav Kadavath, Sandipan Kundu, Amanda Askell, Jackson Kernion,
  Andy Jones, Anna Chen, Anna Goldie, Azalia Mirhoseini, Cameron McKinnon,
  et~al. 2022{\natexlab{b}}.
\newblock Constitutional ai: Harmlessness from ai feedback.
\newblock \emph{arXiv preprint arXiv:2212.08073}.

\bibitem[{Bartlett and Mendelson(2002)}]{bartlett2002rademacher}
Peter~L Bartlett and Shahar Mendelson. 2002.
\newblock Rademacher and gaussian complexities: Risk bounds and structural
  results.
\newblock \emph{Journal of Machine Learning Research}, 3:463--482.

\bibitem[{Bu et~al.(2020)Bu, Zou, and Veeravalli}]{bu2020tightening}
Yuheng Bu, Shaofeng Zou, and Venugopal~V Veeravalli. 2020.
\newblock Tightening mutual information-based bounds on generalization error.
\newblock \emph{IEEE Journal on Selected Areas in Information Theory},
  1(1):121--130.

\bibitem[{Burns et~al.(2023)Burns, Izmailov, Kirchner, Baker, Gao,
  Aschenbrenner, Chen, Ecoffet, Joglekar, Leike et~al.}]{burns2023weak}
Collin Burns, Pavel Izmailov, Jan~Hendrik Kirchner, Bowen Baker, Leo Gao,
  Leopold Aschenbrenner, Yining Chen, Adrien Ecoffet, Manas Joglekar, Jan
  Leike, et~al. 2023.
\newblock Weak-to-strong generalization: Eliciting strong capabilities with
  weak supervision.
\newblock \emph{arXiv preprint arXiv:2312.09390}.

\bibitem[{Charikar et~al.(2024)Charikar, Pabbaraju, and
  Shiragur}]{charikar2024quantifying}
Moses Charikar, Chirag Pabbaraju, and Kirankumar Shiragur. 2024.
\newblock Quantifying the gain in weak-to-strong generalization.
\newblock \emph{Advances in neural information processing systems}.

\bibitem[{Cui et~al.(2025)Cui, Zhang, Sun, Wu, and Zhang}]{cui2024bayesian}
Ziyun Cui, Ziyang Zhang, Guangzhi Sun, Wen Wu, and Chao Zhang. 2025.
\newblock Bayesian weaks-to-strong from text classification to generation.
\newblock In \emph{The Thirteenth International Conference on Learning
  Representations}.

\bibitem[{Daunas et~al.(2025)Daunas, Esnaola, Perlaza, and
  Aminian}]{daunas2025generalization}
Francisco Daunas, I{\~n}aki Esnaola, Samir~M Perlaza, and Gholamali Aminian.
  2025.
\newblock Generalization error of $ f $-divergence stabilized algorithms via
  duality.
\newblock \emph{arXiv preprint arXiv:2502.14544}.

\bibitem[{Daunas et~al.(2024)Daunas, Esnaola, Perlaza, and
  Poor}]{daunas2024equivalence}
Francisco Daunas, I{\~n}aki Esnaola, Samir~M Perlaza, and H~Vincent Poor. 2024.
\newblock Equivalence of empirical risk minimization to regularization on the
  family of f-divergences.
\newblock In \emph{2024 IEEE International Symposium on Information Theory},
  pages 759--764. IEEE.

\bibitem[{Ghasemipour et~al.(2020)Ghasemipour, Zemel, and
  Gu}]{ghasemipour2020divergence}
Seyed Kamyar~Seyed Ghasemipour, Richard Zemel, and Shixiang Gu. 2020.
\newblock A divergence minimization perspective on imitation learning methods.
\newblock In \emph{Conference on robot learning}, pages 1259--1277. PMLR.

\bibitem[{Gilardoni(2010)}]{gilardoni2010pinsker}
Gustavo~L Gilardoni. 2010.
\newblock On pinsker's and vajda's type inequalities for csisz{\'a}r's $ f
  $-divergences.
\newblock \emph{IEEE Transactions on Information Theory}, 56(11):5377--5386.

\bibitem[{Go et~al.(2023)Go, Korbak, Kruszewski, Rozen, Ryu, and
  Dymetman}]{pmlr-v202-go23a}
Dongyoung Go, Tomasz Korbak, Germ\`{a}n Kruszewski, Jos Rozen, Nahyeon Ryu, and
  Marc Dymetman. 2023.
\newblock Aligning language models with preferences through f-divergence
  minimization.
\newblock In \emph{Proceedings of the 40th International Conference on Machine
  Learning}, pages 11546--11583.

\bibitem[{Gu et~al.(2024)Gu, Dong, Wei, and Huang}]{gu2024minillm}
Yuxian Gu, Li~Dong, Furu Wei, and Minlie Huang. 2024.
\newblock Minillm: Knowledge distillation of large language models.
\newblock In \emph{International Conference on Learning Representations}.

\bibitem[{Guo et~al.(2024)Guo, Chen, Wang, Han, Xu, and Wang}]{guo2024vision}
Jianyuan Guo, Hanting Chen, Chengcheng Wang, Kai Han, Chang Xu, and Yunhe Wang.
  2024.
\newblock Vision superalignment: Weak-to-strong generalization for vision
  foundation models.
\newblock \emph{arXiv preprint arXiv:2402.03749}.

\bibitem[{Ildiz et~al.(2025)Ildiz, Gozeten, Taga, Mondelli, and
  Oymak}]{ildiz2025highdimensional}
Muhammed~Emrullah Ildiz, Halil~Alperen Gozeten, Ege~Onur Taga, Marco Mondelli,
  and Samet Oymak. 2025.
\newblock High-dimensional analysis of knowledge distillation: Weak-to-strong
  generalization and scaling laws.
\newblock In \emph{The Thirteenth International Conference on Learning
  Representations}.

\bibitem[{Ke et~al.(2021)Ke, Choudhury, Barnes, Sun, Lee, and
  Srinivasa}]{ke2021imitation}
Liyiming Ke, Sanjiban Choudhury, Matt Barnes, Wen Sun, Gilwoo Lee, and
  Siddhartha Srinivasa. 2021.
\newblock Imitation learning as f-divergence minimization.
\newblock In \emph{Algorithmic Foundations of Robotics XIV: Proceedings of the
  Fourteenth Workshop on the Algorithmic Foundations of Robotics 14}, pages
  313--329. Springer.

\bibitem[{Kim et~al.(2025)Kim, Yi, Yao, Huang, Bak, Evans, and
  Xie}]{kim2025research}
HyunJin Kim, Xiaoyuan Yi, Jing Yao, Muhua Huang, JinYeong Bak, James Evans, and
  Xing Xie. 2025.
\newblock Research on superalignment should advance now with parallel
  optimization of competence and conformity.
\newblock \emph{arXiv preprint arXiv:2503.07660}.

\bibitem[{Kim et~al.(2024)Kim, Yi, Yao, Lian, Huang, Duan, Bak, and
  Xie}]{kim2024road}
HyunJin Kim, Xiaoyuan Yi, Jing Yao, Jianxun Lian, Muhua Huang, Shitong Duan,
  JinYeong Bak, and Xing Xie. 2024.
\newblock The road to artificial superintelligence: A comprehensive survey of
  superalignment.
\newblock \emph{arXiv preprint arXiv:2412.16468}.

\bibitem[{Lang et~al.(2025)Lang, Huang, and Li}]{lang2025debate}
Hao Lang, Fei Huang, and Yongbin Li. 2025.
\newblock Debate helps weak-to-strong generalization.
\newblock \emph{arXiv preprint arXiv:2501.13124}.

\bibitem[{Lang et~al.(2024)Lang, Sontag, and
  Vijayaraghavan}]{lang2024theoretical}
Hunter Lang, David Sontag, and Aravindan Vijayaraghavan. 2024.
\newblock Theoretical analysis of weak-to-strong generalization.
\newblock In \emph{The Thirty-eighth Annual Conference on Neural Information
  Processing Systems}.

\bibitem[{Ledoux and Talagrand(2013)}]{ledoux2013probability}
Michel Ledoux and Michel Talagrand. 2013.
\newblock \emph{Probability in Banach Spaces: isoperimetry and processes}.
\newblock Springer Science \& Business Media.

\bibitem[{Li and Farnia(2023)}]{li2023mode}
Cheuk~Ting Li and Farzan Farnia. 2023.
\newblock Mode-seeking divergences: theory and applications to gans.
\newblock In \emph{International Conference on Artificial Intelligence and
  Statistics}, pages 8321--8350. PMLR.

\bibitem[{Li et~al.(2024)Li, Zhang, and Farnia}]{li2024convergence}
Cheuk~Ting Li, Jingwei Zhang, and Farzan Farnia. 2024.
\newblock On convergence in wasserstein distance and f-divergence minimization
  problems.
\newblock In \emph{International Conference on Artificial Intelligence and
  Statistics}, pages 2062--2070. PMLR.

\bibitem[{Liese and Vajda(2006)}]{liese2006divergences}
Friedrich Liese and Igor Vajda. 2006.
\newblock On divergences and informations in statistics and information theory.
\newblock \emph{IEEE Transactions on Information Theory}, 52(10):4394--4412.

\bibitem[{Liu and Alahi(2024)}]{liu2024co}
Yuejiang Liu and Alexandre Alahi. 2024.
\newblock Co-supervised learning: Improving weak-to-strong generalization with
  hierarchical mixture of experts.
\newblock \emph{arXiv preprint arXiv:2402.15505}.

\bibitem[{Lyu et~al.(2025)Lyu, Yan, Wang, Yin, Ren, de~Rijke, and
  Ren}]{lyu2024macpo}
Yougang Lyu, Lingyong Yan, Zihan Wang, Dawei Yin, Pengjie Ren, Maarten
  de~Rijke, and Zhaochun Ren. 2025.
\newblock Macpo: Weak-to-strong alignment via multi-agent contrastive
  preference optimization.
\newblock In \emph{The Thirteenth International Conference on Learning
  Representations}.

\bibitem[{Medvedev et~al.(2025)Medvedev, Lyu, Yu, Arora, Li, and
  Srebro}]{medvedev2025weak}
Marko Medvedev, Kaifeng Lyu, Dingli Yu, Sanjeev Arora, Zhiyuan Li, and Nathan
  Srebro. 2025.
\newblock Weak-to-strong generalization even in random feature networks,
  provably.
\newblock \emph{arXiv preprint arXiv:2503.02877}.

\bibitem[{Mulgund and Pabbaraju(2025)}]{mulgund2025relating}
Abhijeet Mulgund and Chirag Pabbaraju. 2025.
\newblock Relating misfit to gain in weak-to-strong generalization beyond the
  squared loss.
\newblock \emph{arXiv preprint arXiv:2501.19105}.

\bibitem[{Novello and Tonello(2024)}]{novello2024fdivergence}
Nicola Novello and Andrea~M Tonello. 2024.
\newblock f-divergence based classification: Beyond the use of cross-entropy.
\newblock In \emph{Forty-first International Conference on Machine Learning}.

\bibitem[{Nowozin et~al.(2016)Nowozin, Cseke, and Tomioka}]{nowozin2016f}
Sebastian Nowozin, Botond Cseke, and Ryota Tomioka. 2016.
\newblock f-gan: Training generative neural samplers using variational
  divergence minimization.
\newblock \emph{Advances in neural information processing systems}, 29.

\bibitem[{OpenAI(2023)}]{openai_superalignment}
OpenAI. 2023.
\newblock \href {https://openai.com/index/introducing-superalignment/}
  {Introducing superalignment}.

\bibitem[{Ouyang et~al.(2022)Ouyang, Wu, Jiang, Almeida, Wainwright, Mishkin,
  Zhang, Agarwal, Slama, Ray et~al.}]{ouyang2022training}
Long Ouyang, Jeffrey Wu, Xu~Jiang, Diogo Almeida, Carroll Wainwright, Pamela
  Mishkin, Chong Zhang, Sandhini Agarwal, Katarina Slama, Alex Ray, et~al.
  2022.
\newblock Training language models to follow instructions with human feedback.
\newblock \emph{Advances in neural information processing systems},
  35:27730--27744.

\bibitem[{Pawelczyk et~al.(2024)Pawelczyk, Sun, Qi, Kumar, and
  Lakkaraju}]{pawelczyk2024generalizing}
Martin Pawelczyk, Lillian Sun, Zhenting Qi, Aounon Kumar, and Himabindu
  Lakkaraju. 2024.
\newblock Generalizing trust: Weak-to-strong trustworthiness in language
  models.
\newblock \emph{arXiv preprint arXiv:2501.00418}.

\bibitem[{Polyanskiy and Wu(2025)}]{polyanskiy2025information}
Yury Polyanskiy and Yihong Wu. 2025.
\newblock \emph{Information theory: From coding to learning}.
\newblock Cambridge university press.

\bibitem[{Radford et~al.(2019)Radford, Wu, Child, Luan, Amodei, Sutskever
  et~al.}]{radford2019language}
Alec Radford, Jeffrey Wu, Rewon Child, David Luan, Dario Amodei, Ilya
  Sutskever, et~al. 2019.
\newblock Language models are unsupervised multitask learners.
\newblock \emph{OpenAI blog}, 1(8):9.

\bibitem[{R{\'e}nyi(1961)}]{renyi1961measures}
Alfr{\'e}d R{\'e}nyi. 1961.
\newblock On measures of entropy and information.
\newblock In \emph{Proceedings of the fourth Berkeley symposium on mathematical
  statistics and probability, volume 1: contributions to the theory of
  statistics}, volume~4, pages 547--562. University of California Press.

\bibitem[{Russo and Zou(2016)}]{russo2016controlling}
Daniel Russo and James Zou. 2016.
\newblock Controlling bias in adaptive data analysis using information theory.
\newblock In \emph{Artificial Intelligence and Statistics}, pages 1232--1240.

\bibitem[{Sang et~al.(2024)Sang, Wang, Zhang, Zhu, Kong, Ye, Wei, and
  Xiao}]{sang2024improving}
Jitao Sang, Yuhang Wang, Jing Zhang, Yanxu Zhu, Chao Kong, Junhong Ye, Shuyu
  Wei, and Jinlin Xiao. 2024.
\newblock Improving weak-to-strong generalization with scalable oversight and
  ensemble learning.
\newblock \emph{arXiv preprint arXiv:2402.00667}.

\bibitem[{Sason and Verd{\'u}(2016)}]{sason2016f}
Igal Sason and Sergio Verd{\'u}. 2016.
\newblock f-divergence inequalities.
\newblock \emph{IEEE Transactions on Information Theory}, 62(11):5973--6006.

\bibitem[{Shin et~al.(2024)Shin, Cooper, and Sala}]{shin2024weak}
Changho Shin, John Cooper, and Frederic Sala. 2024.
\newblock Weak-to-strong generalization through the data-centric lens.
\newblock \emph{arXiv preprint arXiv:2412.03881}.

\bibitem[{Shwartz-Ziv and Tishby(2017)}]{shwartz2017opening}
Ravid Shwartz-Ziv and Naftali Tishby. 2017.
\newblock Opening the black box of deep neural networks via information.
\newblock \emph{arXiv preprint arXiv:1703.00810}.

\bibitem[{Somerstep et~al.(2024)Somerstep, Polo, Banerjee, Ritov, Yurochkin,
  and Sun}]{somerstep2024statistical}
Seamus Somerstep, Felipe~Maia Polo, Moulinath Banerjee, Ya'acov Ritov, Mikhail
  Yurochkin, and Yuekai Sun. 2024.
\newblock A statistical framework for weak-to-strong generalization.
\newblock \emph{arXiv preprint arXiv:2405.16236}.

\bibitem[{Somerstep et~al.(2025)Somerstep, Polo, Banerjee, Ritov, Yurochkin,
  and Sun}]{somerstep2025transfer}
Seamus Somerstep, Felipe~Maia Polo, Moulinath Banerjee, Yaacov Ritov, Mikhail
  Yurochkin, and Yuekai Sun. 2025.
\newblock A transfer learning framework for weak to strong generalization.
\newblock In \emph{The Thirteenth International Conference on Learning
  Representations}.

\bibitem[{Tang and Liu(2024)}]{tang2023information}
Huayi Tang and Yong Liu. 2024.
\newblock Information-theoretic generalization bounds for transductive learning
  and its applications.
\newblock \emph{Journal of Machine Learning Research}, 25(407):1--69.

\bibitem[{Tishby and Zaslavsky(2015)}]{tishby2015deep}
Naftali Tishby and Noga Zaslavsky. 2015.
\newblock Deep learning and the information bottleneck principle.
\newblock In \emph{2015 ieee information theory workshop (itw)}, pages 1--5.
  Ieee.

\bibitem[{Vapnik(1999)}]{vapnik1999overview}
Vladimir~N Vapnik. 1999.
\newblock An overview of statistical learning theory.
\newblock \emph{IEEE transactions on neural networks}, 10(5):988--999.

\bibitem[{Wan et~al.(2020)Wan, Li, and Hovakimyan}]{wan2020f}
Neng Wan, Dapeng Li, and Naira Hovakimyan. 2020.
\newblock F-divergence variational inference.
\newblock \emph{Advances in neural information processing systems},
  33:17370--17379.

\bibitem[{Wang et~al.(2024)Wang, Jiang, Yang, Liu, and Chen}]{wang2023beyond}
Chaoqi Wang, Yibo Jiang, Chenghao Yang, Han Liu, and Yuxin Chen. 2024.
\newblock Beyond reverse kl: Generalizing direct preference optimization with
  diverse divergence constraints.
\newblock In \emph{International Conference on Learning Representations}.

\bibitem[{Wang and Mao(2023)}]{wang2022information}
Ziqiao Wang and Yongyi Mao. 2023.
\newblock Information-theoretic analysis of unsupervised domain adaptation.
\newblock In \emph{International Conference on Learning Representations}.

\bibitem[{Wang and Mao(2024)}]{wang2024on}
Ziqiao Wang and Yongyi Mao. 2024.
\newblock On f-divergence principled domain adaptation: An improved framework.
\newblock In \emph{The Thirty-eighth Annual Conference on Neural Information
  Processing Systems}.

\bibitem[{Wei and Liu(2021)}]{wei2021when}
Jiaheng Wei and Yang Liu. 2021.
\newblock When optimizing f-divergence is robust with label noise.
\newblock In \emph{International Conference on Learning Representations}.

\bibitem[{Wen et~al.(2023)Wen, Li, Du, and Mou}]{wen-etal-2023-f}
Yuqiao Wen, Zichao Li, Wenyu Du, and Lili Mou. 2023.
\newblock f-divergence minimization for sequence-level knowledge distillation.
\newblock In \emph{Proceedings of the 61st Annual Meeting of the Association
  for Computational Linguistics (Volume 1: Long Papers)}, pages 10817--10834.

\bibitem[{Wu and Sahai(2025)}]{wu2024provable}
David~X Wu and Anant Sahai. 2025.
\newblock Provable weak-to-strong generalization via benign overfitting.
\newblock In \emph{The Thirteenth International Conference on Learning
  Representations}.

\bibitem[{Xu and Raginsky(2017)}]{xu2017information}
Aolin Xu and Maxim Raginsky. 2017.
\newblock Information-theoretic analysis of generalization capability of
  learning algorithms.
\newblock \emph{Advances in neural information processing systems}, 30.

\bibitem[{Xu et~al.(2025{\natexlab{a}})Xu, Yao, Wang, and Liu}]{xu2025on}
Gengze Xu, Wei Yao, Ziqiao Wang, and Yong Liu. 2025{\natexlab{a}}.
\newblock On the emergence of weak-to-strong generalization: A bias-variance
  perspective.
\newblock \emph{arXiv preprint arXiv:2505.24313}.

\bibitem[{Xu et~al.(2025{\natexlab{b}})Xu, Han, Jiang, Song, and
  Ermon}]{xu2025fpo}
Minkai Xu, Jiaqi Han, Mingjian Jiang, Yuxuan Song, and Stefano Ermon.
  2025{\natexlab{b}}.
\newblock f-po: Generalizing preference optimization with f-divergence
  minimization.
\newblock In \emph{The 28th International Conference on Artificial Intelligence
  and Statistics}.

\bibitem[{Xue et~al.(2025)Xue, Li, and Mirzasoleiman}]{xue2025representations}
Yihao Xue, Jiping Li, and Baharan Mirzasoleiman. 2025.
\newblock Representations shape weak-to-strong generalization: Theoretical
  insights and empirical predictions.
\newblock \emph{arXiv preprint arXiv:2502.00620}.

\bibitem[{Yang et~al.(2025)Yang, Shen, Shen, Yao, Liu, Gong, Lin, and
  Wen}]{yang2024super}
Wenkai Yang, Shiqi Shen, Guangyao Shen, Wei Yao, Yong Liu, Zhi Gong, Yankai
  Lin, and Ji-Rong Wen. 2025.
\newblock Super (ficial)-alignment: Strong models may deceive weak models in
  weak-to-strong generalization.
\newblock In \emph{The Thirteenth International Conference on Learning
  Representations}.

\bibitem[{Yang et~al.(2024)Yang, Ma, and Liu}]{yang-etal-2024-weak}
Yuqing Yang, Yan Ma, and Pengfei Liu. 2024.
\newblock Weak-to-strong reasoning.
\newblock In \emph{Findings of the Association for Computational Linguistics:
  EMNLP 2024}, pages 8350--8367.

\bibitem[{Yao et~al.(2025{\natexlab{a}})Yao, Yang, Wang, Lin, and
  Liu}]{yao2025revisiting}
Wei Yao, Wenkai Yang, Ziqiao Wang, Yankai Lin, and Yong Liu.
  2025{\natexlab{a}}.
\newblock Revisiting weak-to-strong generalization in theory and practice:
  Reverse kl vs. forward kl.
\newblock \emph{arXiv preprint arXiv:2502.11107}.

\bibitem[{Yao et~al.(2025{\natexlab{b}})Yao, Yang, Wang, Lin, and
  Liu}]{yao2025understanding}
Wei Yao, Wenkai Yang, Ziqiao Wang, Yankai Lin, and Yong Liu.
  2025{\natexlab{b}}.
\newblock Understanding the capabilities and limitations of weak-to-strong
  generalization.
\newblock \emph{arXiv preprint arXiv:2502.01458}.

\bibitem[{Ye et~al.(2025)Ye, Laidlaw, and Steinhardt}]{ye2025iterative}
Yaowen Ye, Cassidy Laidlaw, and Jacob Steinhardt. 2025.
\newblock Iterative label refinement matters more than preference optimization
  under weak supervision.
\newblock In \emph{The Thirteenth International Conference on Learning
  Representations}.

\bibitem[{Yu et~al.(2020)Yu, Song, Song, and Ermon}]{yu2020training}
Lantao Yu, Yang Song, Jiaming Song, and Stefano Ermon. 2020.
\newblock Training deep energy-based models with f-divergence minimization.
\newblock In \emph{International Conference on Machine Learning}, pages
  10957--10967. PMLR.

\bibitem[{Zhong and Tandon(2023)}]{zhong2023learning}
Meiyu Zhong and Ravi Tandon. 2023.
\newblock Learning fair classifiers via min-max f-divergence regularization.
\newblock In \emph{2023 59th Annual Allerton Conference on Communication,
  Control, and Computing (Allerton)}, pages 1--8. IEEE.

\bibitem[{Zhou et~al.(2025)Zhou, Shen, and Cheng}]{zhou2025weak}
Yucheng Zhou, Jianbing Shen, and Yu~Cheng. 2025.
\newblock Weak to strong generalization for large language models with
  multi-capabilities.
\newblock In \emph{The Thirteenth International Conference on Learning
  Representations}.

\end{thebibliography}


\newpage
\onecolumn
\appendix

{\LARGE \centering \textbf{Contents} \par}
\startcontents[section]
\printcontents[section]{l}{1}{\setcounter{tocdepth}{2}}


{\LARGE \centering \textbf{Appendix} \par}

\section{Some Useful Definitions and Lemmas}

\subsection{Definitions}

\begin{definition}[Total variation distance] \label{def:tv_distance}
Given two probability distributions $P$ and $Q$, the Total Variation (TV) distance between $P$ and $Q$ is
$$\tv(P \| Q)= \frac{1}{2} \int \left| P(x)-Q(x) \right| d x.$$
\end{definition}

\begin{definition}[Mutual information]
For continuous random variables $X$ and $Y$ with joint probability density function $P(x,y)$ and marginal probability density functions $P(x)$ and $P(y)$, the mutual information is defined as:
$$I(X ; Y)= \iint P(x, y) \log \frac{P(x, y)}{P(x) P(y)} d x d y.$$
\end{definition}

\begin{definition}[Rademacher Complexity~\citep{bartlett2002rademacher}]
Let $P$ be a probability distribution over a domain space $Z$. The Rademacher complexity of the function class $\mathcal{F}$ w.r.t. $P$ for i.i.d. sample $S=\left(z_1, z_2, \cdots, z_m\right)$ with size $n$ is:
$$\mathcal{R}_n(\mathcal{F})=\mathbb{E}_{S \sim P^n}\left[\frac{1}{n} \mathbb{E}_{\boldsymbol{\sigma}}\left[\sup _{f \in \mathcal{F}} \sum_{i=1}^n \sigma_i f\left(z_i\right)\right]\right],$$
where the inner expectation is taken over $\sigma=\left\{\sigma_1, \sigma_2, \cdots, \sigma_m\right\}$ and they are independent random variables following the Rademarcher distribution, i.e., $P\left(\sigma_i=1\right)=P\left(\sigma_i=-1\right)=1 / 2$. 
The empirical Rademacher complexity is defined as,
$$\hat{\mathcal{R}}_n(\mathcal{F}, S)=\frac{1}{n} \mathbb{E}_{\boldsymbol{\sigma}}\left[\sup _{f \in \mathcal{F}} \sum_{i=1}^n \sigma_i f\left(z_i\right)\right].$$
\end{definition}

\begin{definition}[Subgaussian random variable]
A random variable $X \in \R$ is $\sigma$-subgaussian if for any $\lambda$, there holds 
$$\mathbb{E} \left[ e^{\lambda(X-\mathbb{E} [X])} \right] \leq \exp \left( \frac{\lambda^2 \sigma^2}{2} \right).$$
\end{definition}

\subsection{Lemmas}

\begin{lemma}[Corollary 2 from~\citet{gilardoni2010pinsker}] \label{lemma:tv_fdiv}
Let $P$ and $Q$ be two probability distributions, then there holds $\tv(P,Q) = \cO \left( \sqrt{\fdiv(P \| Q)} \right)$.
\end{lemma}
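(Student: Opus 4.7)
The plan is to establish a generalized Pinsker-type inequality relating $\tv$ to $\fdiv$ via a two-region splitting argument. The key structural fact is that any smooth strictly convex $f$ behaves quadratically near $1$ and at least linearly away from $1$, which translates into $\tv$ being controlled by the square root of the $f$-divergence.

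First I would normalize $f$ by subtracting its linearization at $1$: define $\tilde f(x) = f(x) - f'(1)(x-1)$. This substitution leaves $\fdiv$ invariant because $\mathbb{E}_Q[dP/dQ - 1] = 0$, while yielding $\tilde f(1) = \tilde f'(1) = 0$ and $\tilde f''(1) = f''(1) > 0$ (satisfied by all divergences in \cref{table:fdiv}). A second-order Taylor expansion then gives a quadratic lower bound $\tilde f(x) \geq c_1 (x-1)^2$ for $|x-1| \leq \delta$, while convexity together with $\tilde f'(1) = 0$ yields a linear lower bound $\tilde f(x) \geq c_2 |x-1|$ for $|x-1| > \delta$, for some constants $c_1, c_2, \delta > 0$ depending only on $f''(1)$.

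Setting $r = dP/dQ$, I would decompose $\tv(P\|Q) = \tfrac{1}{2}\int |r - 1|\, dQ$ over the near region $\{|r-1| \leq \delta\}$ and the far region $\{|r-1| > \delta\}$. On the near region, Cauchy--Schwarz combined with the quadratic bound controls the contribution by a constant multiple of $\sqrt{\fdiv(P\|Q)}$; on the far region, the linear bound directly controls the contribution by a constant multiple of $\fdiv(P\|Q)$. Summing the two terms and using the trivial estimate $\tv \leq 1$ to absorb the linear contribution into the square-root contribution yields $\tv(P\|Q) = \mathcal{O}\bigl(\sqrt{\fdiv(P\|Q)}\bigr)$ with constants depending only on $f''(1)$.

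The hard part will be the degenerate case in which $f''(1) = 0$ or $f$ fails to be twice differentiable at $1$, which invalidates the quadratic Taylor expansion and forces a more delicate analysis of the joint range of $(\tv, \fdiv)$ as in Gilardoni's original argument. For the smooth strictly convex divergences tabulated in this paper, however, this subtlety does not arise and the two-region splitting yields the stated bound with explicit constants, which is all that is needed for the asymptotic statement in the lemma.
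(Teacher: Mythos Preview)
The paper does not actually prove this lemma; it records it as a known result from \citet{gilardoni2010pinsker} and uses it as a black box in the proof of \cref{theorem:upper_lower_fdiv}. Your proposal therefore goes beyond what the paper does by supplying an independent argument where the paper offers none.

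Your two-region splitting is correct under the hypothesis you isolate (twice differentiability at $1$ with $f''(1)>0$), and that hypothesis does hold for every smooth divergence in \cref{table:fdiv}. The normalization $\tilde f(x)=f(x)-f'(1)(x-1)$, the quadratic lower bound near $x=1$ via Taylor, the linear lower bound away from $x=1$, and the final absorption of the linear term using $\tv\le 1$ all go through as you describe. One step you leave implicit is why the linear lower bound holds: it follows because the secant slope $\tilde f(x)/|x-1|$ is monotone in $|x-1|$ by convexity of $\tilde f$ with minimum at $1$, so for $|x-1|>\delta$ it is at least its value at $|x-1|=\delta$, which is bounded below by $c_1\delta$ via the quadratic bound at the boundary. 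This gives $c_2=c_1\delta$ explicitly.

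Compared to Gilardoni's original argument, which extracts the sharp leading constant $f''(1)/2$ through a finer analysis of the joint range, your approach is more elementary and yields only an $\cO$-statement with non-explicit constants. But that is exactly what the lemma as stated in the paper asserts, and all that the downstream proofs in \cref{proof:upper_lower_fdiv} require, so nothing is lost for the present application. Your caveat about the degenerate case $f''(1)=0$ is well placed and honestly flagged; it is irrelevant here since the paper never uses such an $f$.
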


\begin{lemma}[Hoeffding's lemma] \label{hoeffding_lemma}
Let $X \in \R$ such that $a \leq X \leq b$. Then, for all $\lambda \in \mathbb{R}$,
$$\mathbb{E}\left[e^{\lambda(X-\mathbb{E}[X])}\right] \leq \exp \left(\frac{\lambda^2(b-a)^2}{8}\right).$$
\end{lemma}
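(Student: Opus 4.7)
The plan is to prove Hoeffding's lemma via the standard cumulant-generating-function approach: show that the log-MGF of the centered random variable $Y=X-\mathbb{E}[X]$ is bounded by a quadratic in $\lambda$, namely $\lambda^2(b-a)^2/8$, and then exponentiate. The whole argument hinges on Taylor-expanding $L(\lambda) = \log \mathbb{E}[e^{\lambda Y}]$ around the origin and controlling its second derivative uniformly on the line.

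First, I would reduce to the centered case by setting $Y = X - \mathbb{E}[X]$, which has mean zero and takes values in $[a - \mathbb{E}[X], b - \mathbb{E}[X]]$, an interval of length $b-a$. Define $L(\lambda) = \log \mathbb{E}[e^{\lambda Y}]$. A direct calculation gives $L(0) = 0$ and $L'(\lambda) = \mathbb{E}[Y e^{\lambda Y}]/\mathbb{E}[e^{\lambda Y}]$, so $L'(0) = \mathbb{E}[Y] = 0$. Differentiating once more, $L''(\lambda)$ equals the variance of $Y$ under the tilted probability measure whose density with respect to the law of $Y$ is proportional to $e^{\lambda y}$.

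Next, I would bound $L''(\lambda) \le (b-a)^2/4$ using the elementary fact that any random variable supported in an interval of length $b-a$ has variance at most $(b-a)^2/4$; since the tilted measure is absolutely continuous with respect to the original law of $Y$, the support is preserved. Applying Taylor's theorem with the Lagrange form of the remainder at $\lambda = 0$ then yields $L(\lambda) = \tfrac{\lambda^2}{2} L''(\xi) \le \tfrac{\lambda^2 (b-a)^2}{8}$ for some $\xi$ between $0$ and $\lambda$, and exponentiating delivers the stated inequality for every $\lambda \in \mathbb{R}$.

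The main obstacle, though modest, is the variance bound at the tilted measure: one must carefully justify that the exponential tilt preserves the almost-sure support, and verify the extremal $(b-a)^2/4$ bound (attained by the two-point law placing equal mass at the interval endpoints, which one can establish by writing $\mathrm{Var}(Z) \le \mathbb{E}[(Z - (a+b)/2)^2] \le ((b-a)/2)^2$). As a fallback, I would use the purely convexity-based proof that writes each $x \in [a,b]$ as a convex combination of the endpoints, applies convexity of $x \mapsto e^{\lambda x}$, and then simplifies via the substitution $p = -a/(b-a)$ together with a Taylor expansion of a suitably defined auxiliary function to extract the constant $1/8$ without differentiating under the expectation.
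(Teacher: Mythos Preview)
Your argument is correct and is the textbook proof of Hoeffding's lemma via the cumulant-generating function: center the variable, identify $L''(\lambda)$ as the variance under the exponentially tilted law, bound that variance by $(b-a)^2/4$ using the range constraint, and conclude by second-order Taylor expansion. The convexity-based fallback you sketch is also standard and sound.

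There is, however, nothing to compare against: the paper does not prove this lemma. It is simply stated in the appendix as a known auxiliary result and is invoked only once, in the proof of \cref{corollary:upper_lower_fdiv}, to observe that a bounded $f$-divergence is subgaussian. So your proposal supplies a proof where the paper gives none; either of your two routes would be acceptable, and the first (the tilted-variance argument) is the cleanest.
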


\begin{lemma}[Lemma A.2 from~\citet{wang2022information}] \label{lemma:ziqiao}
Let $Q$ and $P$ be two probability distributions. If $g(\theta)$ is $R$-subgaussian, then
$$\left|\mathbb{E}_{\theta^{\prime} \sim Q}\left[g\left(\theta^{\prime}\right)\right]-\mathbb{E}_{\theta \sim P}[g(\theta)]\right| \leq \sqrt{2 R^2 \mathrm{D}_{\mathrm{KL}}(Q \| P)}.$$
\end{lemma}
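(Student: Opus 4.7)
The plan is to derive the lemma from two classical ingredients: the Donsker-Varadhan variational representation of the Kullback-Leibler divergence, and the defining log-MGF bound of an $R$-subgaussian random variable. Recall that Donsker-Varadhan asserts $\mathbb{E}_Q[h] - \log \mathbb{E}_P[e^{h}] \leq \kl(Q\|P)$ for every measurable $h$ with $\mathbb{E}_P[e^h] < \infty$. I will apply this identity to the scaled observable $h = \lambda g$ for a free parameter $\lambda \in \mathbb{R}$, thereby reducing the entire problem to a one-dimensional Chernoff-style optimization.

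First I would substitute $h = \lambda g$ to obtain
\[
\lambda \mathbb{E}_Q[g] - \log \mathbb{E}_P[e^{\lambda g}] \leq \kl(Q \| P).
\]
Next I would invoke the $R$-subgaussian hypothesis on $g$ under $P$, which by definition gives $\log \mathbb{E}_P[e^{\lambda(g - \mathbb{E}_P[g])}] \leq \lambda^2 R^2 / 2$. Decomposing $\log \mathbb{E}_P[e^{\lambda g}] = \lambda \mathbb{E}_P[g] + \log \mathbb{E}_P[e^{\lambda(g - \mathbb{E}_P[g])}]$ and substituting yields
\[
\lambda \bigl(\mathbb{E}_Q[g] - \mathbb{E}_P[g]\bigr) \leq \kl(Q \| P) + \frac{\lambda^2 R^2}{2}.
\]
Then I would optimize over $\lambda > 0$, choosing $\lambda^\star = \sqrt{2\kl(Q\|P)}/R$ (obtained by setting the derivative of the right side after dividing by $\lambda$ to zero) to produce the one-sided bound $\mathbb{E}_Q[g] - \mathbb{E}_P[g] \leq \sqrt{2 R^2 \kl(Q\|P)}$. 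Repeating the argument with $\lambda < 0$---equivalently, applying the preceding chain to $-g$, which is $R$-subgaussian whenever $g$ is---yields the opposite tail, and the two one-sided bounds combine into the absolute-value inequality claimed in the lemma.

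I do not anticipate a major obstacle here; the argument is a textbook variational-plus-Chernoff calculation, and the only auxiliary fact needed beyond the two stated ingredients is that negation preserves the subgaussian constant, which is immediate from the definition. The one clarification worth flagging is that the statement ``$g(\theta)$ is $R$-subgaussian'' does not specify a reference measure, and the argument requires $g$ to be $R$-subgaussian under $P$---the measure appearing as the second argument of the KL divergence. This is the convention used in \citet{wang2022information} and throughout the information-theoretic generalization literature, so I would simply make it explicit at the outset. The only mildly delicate part of the execution is ensuring the $\lambda$-optimization is carried out on the correct side (positive versus negative) so that the absolute value is recovered rather than just one tail.
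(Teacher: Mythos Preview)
Your argument is correct and is exactly the standard Donsker--Varadhan plus Chernoff optimization proof of this change-of-measure inequality. The paper does not actually supply its own proof of this lemma---it merely quotes the statement from \citet{wang2022information} and uses it as a black box---so there is nothing to compare against; your derivation is the classical one that appears in that reference and throughout the information-theoretic generalization literature, and your remark that the subgaussian hypothesis must be understood relative to $P$ is the right clarification.
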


\begin{lemma}[Talagrand’s Contraction Lemma~\citep{ledoux2013probability}] \label{talagrand}

Let $\Phi$ be l-Lipschitz functions from $\mathbb{R}$ to $\mathbb{R}, \sigma_1, \cdots, \sigma_m$ be Rademacher random variables, and $S=\left\{z_1, \cdots, z_m\right\}$ be a random i.i.d. sample. Then, for any hypothesis set $\mathcal{F}$ of real-valued functions, there holds
$$\hat{\mathcal{R}}_n(\Phi \circ \mathcal{F}, S) = \frac{1}{n} \mathbb{E}_{\boldsymbol{\sigma}}\left[\sup _{f \in \mathcal{F}} \sum_{i=1}^n \sigma_i\left(\Phi_i \circ f\right)\left(z_i\right)\right] \leq \frac{l}{n} \mathbb{E}_{\boldsymbol{\sigma}}\left[\sup _{f \in \mathcal{F}} \sum_{i=1}^n \sigma_i f\left(z_i\right)\right]=l \hat{\mathcal{R}}_n(\mathcal{F}, S).$$
\end{lemma}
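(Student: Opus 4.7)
\begin{proofsketch}
The plan is to proceed by induction on the coordinates, peeling off one Rademacher variable at a time and reducing the full inequality to a sequence of single-coordinate comparisons, each of which follows from the $l$-Lipschitz property of $\Phi_i$.

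By iterated expectation, it suffices to replace $\Phi_i$ by $l \cdot \mathrm{id}$ one coordinate at a time. Fix $\sigma_1, \ldots, \sigma_{n-1}$ and write $h(f) = \sum_{i=1}^{n-1} \sigma_i (\Phi_i \circ f)(z_i)$. Then the conditional expectation over $\sigma_n$ decomposes as
$$\mathbb{E}_{\sigma_n}\!\left[\sup_{f \in \mathcal{F}} \bigl\{ h(f) + \sigma_n (\Phi_n \circ f)(z_n) \bigr\}\right] = \tfrac{1}{2}\sup_{f \in \mathcal{F}} [h(f) + \Phi_n(f(z_n))] + \tfrac{1}{2}\sup_{f \in \mathcal{F}} [h(f) - \Phi_n(f(z_n))],$$
so the goal of each single-coordinate step is to bound this sum of two suprema by the analogous expression with $\Phi_n$ replaced by $l \cdot \mathrm{id}$.

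For the single-coordinate comparison, fix $\epsilon > 0$ and select $f_1, f_2 \in \mathcal{F}$ that $\epsilon$-achieve the two suprema on the left. Their sum is at most $h(f_1) + h(f_2) + \Phi_n(f_1(z_n)) - \Phi_n(f_2(z_n)) + 2\epsilon$, which by the $l$-Lipschitz property is bounded by $h(f_1) + h(f_2) + l\,|f_1(z_n) - f_2(z_n)| + 2\epsilon$. Assuming without loss of generality that $f_1(z_n) \geq f_2(z_n)$, the absolute value can be dropped, yielding $[h(f_1) + l f_1(z_n)] + [h(f_2) - l f_2(z_n)] + 2\epsilon \leq \sup_f [h(f) + l f(z_n)] + \sup_f [h(f) - l f(z_n)] + 2\epsilon$. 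Letting $\epsilon \to 0$ completes the replacement.

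Iterating this procedure across all $n$ coordinates yields $\hat{\mathcal{R}}_n(\Phi \circ \mathcal{F}, S) \leq l \hat{\mathcal{R}}_n(\mathcal{F}, S)$ as claimed. The main obstacle is the single-coordinate step: the WLOG ordering relies critically on the structural symmetry of summing two suprema, which itself reflects the $\pm 1$ symmetry of the Rademacher distribution. This symmetry is exactly what permits the absolute value produced by the Lipschitz inequality to be absorbed without paying an extra factor; without the pairing of suprema induced by $\mathbb{E}_{\sigma_n}[\,\cdot\,]$, the argument would break and one would only recover a bound with an inferior constant.
\end{proofsketch}
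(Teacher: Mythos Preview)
The paper does not prove this lemma; it is stated with a citation to Ledoux--Talagrand and invoked as a black box in the proof of \cref{lem:uniform-convergence}. Your sketch is the standard coordinate-peeling argument for the contraction principle and is correct.

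One small presentational point: the phrase ``assuming without loss of generality that $f_1(z_n) \geq f_2(z_n)$'' is slightly misleading, since $f_1$ and $f_2$ were chosen to $\epsilon$-achieve two \emph{different} suprema and cannot simply be relabeled. What actually makes the step go through is that in the opposite case $f_1(z_n) < f_2(z_n)$ one obtains $[h(f_1) - l f_1(z_n)] + [h(f_2) + l f_2(z_n)]$, which is bounded by the \emph{same} pair of suprema because the set $\{+l,-l\}$ is invariant under a sign flip. You identify exactly this mechanism in your closing paragraph, so the argument is complete; it simply reads more cleanly as a two-case split than as a WLOG.
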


\begin{remark}
However, since empirical Rademacher complexity $\hat{\mathcal{R}}_n$ is a sample-based estimate of the true Rademacher complexity $\mathcal{R}_n$, Talagrand’s Lemma applies at both the empirical and population levels. The key theoretical statement is about empirical Rademacher complexity in general, and it naturally extends to Rademacher complexity $\mathcal{R}_n$ in finite-sample settings.
\end{remark}

\section{Main Proof}

\subsection{Proof of~\cref{theorem:upper_lower_fdiv}} \label{proof:upper_lower_fdiv}

\begin{proof}
Recall that $\forall y = (y_1, \cdots, y_k)^T \in \cY$, there holds $0 < y_i < 1$.
$\forall g,h \in \cF$, we have that $\frac{g(x)}{h(x)}$ is bounded.
Furthermore, $\fdiv$ is also bounded.
And there holds

\begin{align*}
    \left| \dist(G^{sw}, G^\star) - \dist(G^w, G^\star) \right| & = \left| \bE_{x \sim \cP} \left[ \fdiv(G^{sw}(x) \| G^\star(x)) - \fdiv(G^w(x) \| G^\star(x)) \right] \right|
    \\ & = \left| \bE_{x \sim \cP} \left[ \sum_{i=1}^k [G^\star(x)]_i \cdot \left[ f \left( \frac{[G^{sw}(x)]_i}{[G^\star(x)]_i} \right) - f \left( \frac{[G^w(x)]_i}{[G^\star(x)]_i} \right) \right] \right] \right|
    \\ & = \left| \bE_{x \sim \cP} \left[ \sum_{i=1}^k [G^\star(x)]_i \cdot f'(\xi_i) \cdot \left| \frac{[G^{sw}(x)]_i}{[G^\star(x)]_i} - \frac{[G^w(x)]_i}{[G^\star(x)]_i} \right| \right] \right| \tag{$\xi_i$ is between $\frac{[G^{sw}(x)]_i}{[G^\star(x)]_i}$ and $\frac{[G^w(x)]_i}{[G^\star(x)]_i}$}
    \\ & \le \sup_{\xi_i}{|f'(\xi_i)|} \cdot \bE_{x \sim \cP} \left[ \sum_{i=1}^k [G^\star(x)]_i \cdot \left| \frac{[G^{sw}(x)]_i}{[G^\star(x)]_i} - \frac{[G^w(x)]_i}{[G^\star(x)]_i} \right| \right] 
    \\ & = \sup_{\xi_i}{|f'(\xi_i)|} \cdot \bE_{x \sim \cP} \left[ \sum_{i=1}^k \left| [G^{sw}(x)]_i - [G^w(x)]_i \right| \right]
    \\ & = 2\sup_{\xi_i}{|f'(\xi_i)|} \cdot \bE_{x \sim \cP} \left[ \tv \left(G^{sw}(x), G^w(x) \right) \right]
    \\ & = 2\sup_{\xi_i}{|f'(\xi_i)|} \cdot \bE_{x \sim \cP} \left[ \cO \left( \sqrt{\fdiv \left(G^{sw}(x) \| G^w(x) \right)} \right) \right] \tag{\cref{lemma:tv_fdiv}}
    \\ & = 2\sup_{\xi_i}{|f'(\xi_i)|} \cdot \cO \left[ \bE_{x \sim \cP} \left( \sqrt{\fdiv \left(G^{sw}(x) \| G^w(x) \right)} \right) \right] \tag{The linear nature of expectation}
    \\ & \le 2\sup_{\xi_i}{|f'(\xi_i)|} \cdot \cO \left( \sqrt{ \bE_{x \sim \cP} \; \fdiv \left(G^{sw}(x) \| G^w(x) \right)} \right) \tag{Jensen's inequality}
    \\ & \le 2\sup_{\xi_i}{|f'(\xi_i)|} \cdot \cO \left( \sqrt{ \dist \left(G^{sw}, G^w \right)} \right) 
    \\ & = \cO \left( \sqrt{ \dist \left(G^{sw}, G^w \right)} \right). \tag{$|f'(\xi_i)|$ is bounded}
\end{align*}

\end{proof}

\subsection{Proof of~\cref{theorem:upper_lower_fdiv_sample}} \label{proof:upper_lower_fdiv_sample}

We first prove the uniform convergence result (\cref{lem:uniform-convergence}).
It is a natural extension of previous work~\citep{charikar2024quantifying, yao2025understanding, yao2025revisiting} on other loss functions that also satisfy boundedness and the Lipschitz condition. For completeness, we sketch the main proof here.

\begin{lemma}[Uniform convergence]
\label{lem:uniform-convergence}
Let $(x_1,y_1),\dots,(x_n, y_n)$ be an i.i.d. training sample, where each $x_i \sim \cP$ and $y_i = G^w(x_i)$ for a target function $G^w$. For a fixed strong model representation $h^s$, we employ $f$-divergence loss in W2SG:
\begin{align*}
    & g^{sw} = \argmin_{g \in \cF_{s}}\; \dist(g \circ h^s, G^w),
    \\ & \hat{g}^{sw} = \argmin_{g \in \cF_s} \disthat(g \circ h^s, G^w).
\end{align*}
Assume that the range of $G^w$ and functions in $\cF_s$ is absolutely bounded. Then, with probability at least $1-\delta$ over the draw of $(x_1,y_1),\dots,(x_n, y_n)$, we have
\begin{align*}
    \left|\dist(\hat{G}^{sw}, G^w) - \dist(G^{sw}, G^w) \right| \le \cO\left(\sqrt{\frac{\cC_{\cF_s}}{n}}\right) + \cO\left(\sqrt{\frac{\log(1/\delta)}{n}}\right),
\end{align*}
where $\cC_{\cF_s}$ is a constant capturing the complexity of the function class $\cF_s$, and the asymptotic notation is with respect to $n \to \infty$.
\end{lemma}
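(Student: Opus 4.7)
\begin{proofsketch}
The plan is to reduce the excess-risk gap to a uniform deviation over $\cF_s$ and then control this uniform deviation by Rademacher complexity plus a concentration term, following the standard ERM template.

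First, I would use the optimality of $\hat{g}^{sw}$ on the empirical loss and of $g^{sw}$ on the population loss to write
\begin{align*}
\dist(\hat{G}^{sw},G^w) - \dist(G^{sw},G^w)
&= \bigl[\dist(\hat{G}^{sw},G^w) - \disthat(\hat{G}^{sw},G^w)\bigr] \\
&\quad + \bigl[\disthat(\hat{G}^{sw},G^w) - \disthat(G^{sw},G^w)\bigr] \\
&\quad + \bigl[\disthat(G^{sw},G^w) - \dist(G^{sw},G^w)\bigr],
\end{align*}
where the middle bracket is $\le 0$. Taking absolute values and a supremum yields the familiar bound
$\bigl|\dist(\hat{G}^{sw},G^w) - \dist(G^{sw},G^w)\bigr| \le 2\sup_{g\in\cF_s}\bigl|\disthat(g\circ h^s,G^w) - \dist(g\circ h^s,G^w)\bigr|.$

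Second, under the boundedness assumption on $\cF_s$ and $G^w$ together with the softmax confinement $0<y_i<1$ noted in the preliminaries, the per-sample loss $\ell_g(x) \triangleq \fdiv(g(h^s(x))\|G^w(x))$ is bounded, say by some constant $B$ depending on $f$. Then the supremum functional satisfies the bounded-differences property with parameter $2B/n$, so McDiarmid's inequality gives, with probability $\ge 1-\delta$,
\begin{equation*}
\sup_{g\in\cF_s}\bigl|\disthat(g\circ h^s,G^w) - \dist(g\circ h^s,G^w)\bigr| \le \E\Bigl[\sup_{g\in\cF_s}\bigl|\disthat(g\circ h^s,G^w) - \dist(g\circ h^s,G^w)\bigr|\Bigr] + \cO\Bigl(\sqrt{\log(1/\delta)/n}\Bigr).
\end{equation*}

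Third, I would apply the usual symmetrization argument to upper bound the expected supremum by $2\mathcal{R}_n(\ell\circ\cF_s)$, the Rademacher complexity of the composed loss class. Because $f$ is convex and differentiable on a compact interval (determined by the boundedness assumption, which keeps $g(h^s(x))/G^w(x)$ in a bounded range away from $0$), $f'$ is bounded on that interval, so $\fdiv(\cdot\|G^w(x))$ is Lipschitz in its first argument component-wise with some constant $L$. Talagrand's contraction lemma (\cref{talagrand}) then peels off the loss to give $\mathcal{R}_n(\ell\circ\cF_s) \le kL\,\mathcal{R}_n(\cF_s)$, where the factor $k$ accounts for the $k$ output coordinates (via a coordinate-wise decomposition and triangle inequality inside the supremum). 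Finally, Rademacher complexity of a bounded function class obeys $\mathcal{R}_n(\cF_s) = \cO\bigl(\sqrt{\cC_{\cF_s}/n}\bigr)$ by definition of the complexity constant $\cC_{\cF_s}$, yielding the claimed bound.

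The main obstacle is the Lipschitz-and-boundedness step: softmax outputs are only guaranteed to lie strictly inside $(0,1)$, and for divergences like KL or Pearson $\chi^2$ the derivative $f'$ blows up at the boundary. The lemma's hypothesis that $\cF_s$ and $G^w$ are \emph{absolutely} bounded must therefore be read as bounded \emph{away from} $0$ and $1$, so that the ratio $g(h^s(x))/G^w(x)$ lives in a fixed compact sub-interval of $(0,\infty)$ on which $f$ and $f'$ are both bounded. Once this quantitative boundedness is spelled out, the constants $B$ and $L$ above are finite and the remaining steps are routine. The corollary (\cref{theorem:upper_lower_fdiv_sample}) then follows by combining \cref{lem:uniform-convergence} with \cref{theorem:upper_lower_fdiv} applied to $\hat{G}^{sw}$, using $\sqrt{\dist(\hat{G}^{sw},G^w)} \le \sqrt{\disthat(\hat{G}^{sw},G^w)} + \sqrt{|\dist(\hat{G}^{sw},G^w)-\disthat(\hat{G}^{sw},G^w)|}$ and taking fourth roots to match the exponents in \cref{eq:coro}.
\end{proofsketch}
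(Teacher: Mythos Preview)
Your proposal is correct and follows essentially the same approach as the paper: the same three-term decomposition with the middle term nonpositive by optimality of $\hat{g}^{sw}$, then a Rademacher-complexity-plus-concentration bound on the uniform deviation, and Talagrand's contraction (\cref{talagrand}) to strip the loss and land on $\mathcal{R}_n(\cF_s)=\cO(\sqrt{\cC_{\cF_s}/n})$. Your extra care in spelling out why ``absolutely bounded'' must mean bounded away from the boundary of the simplex (so that $f'$ stays finite and the contraction step is legitimate) is a point the paper leaves implicit.
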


\begin{proof}[Proof of~\cref{lem:uniform-convergence}]
The proof is strongly motivated by lemma 4 in~\citet{charikar2024quantifying}.
Recall again that $\forall y = (y_1, \cdots, y_k)^T \in \cY$, there holds $0 < y_i < 1$.
Therefore, $\fdiv$ is both bounded and Lipschitz in both arguments.
Note that
\begin{multline}
    \label{eqn:risk-decomposition}
    \dist(\hat{G}^{sw}, G^w) - \dist(G^{sw}, G^w) = \underbrace{\dist(\hat{G}^{sw}, G^w) - \disthat(\hat{G}^{sw}, G^w)}_{a} + \\ \underbrace{\disthat(\hat{G}^{sw}, G^w) - \disthat(G^{sw}, G^w)}_{b} + \underbrace{\disthat(G^{sw}, G^w) - \dist(G^{sw}, G^w)}_{c}.
\end{multline}
By the definition of $\hat{G}^{sw}$, the second term $b\le 0$ in \eqref{eqn:risk-decomposition}. 
Therefore,
\begin{align} \label{proof:ineq:conv}
    \left| \dist(\hat{G}^{sw}, G^w) - \dist(G^{sw}, G^w) \right| \le |a| +|c|.
\end{align}
The terms $a$ and $c$ measure the difference between the empirical risk and true population risk, and can be controlled by a standard uniform convergence argument.

Given the dataset $\{(x_i,y_i)\}_{i=1}^n$, where $x_i \sim \cP$ and $y_i = G^w(x_i)$. 
According to statistical learning theory literature~\citep{bartlett2002rademacher}, it first holds that with probability at least $1-\delta$,
\begin{align*}
    \sup_{g \in \cF_s} |\disthat(g \circ h^s, G^w) - \dist(g \circ h^s, G^w)| &\le O \left( \mathcal{R}_n (\ell(\cF_s))\right) + \cO\left(\sqrt{\frac{\log(1/\delta)}{n}}\right),
\end{align*}
where $\mathcal{R}_n (\ell(\cF_s))$ is the \textit{Rademacher complexity} of the loss class of $\cF_s$.
\begin{align*}
    \mathcal{R}_n (\ell(\cF_s)) &= \E_{S} \E_{\eps_i \sim \{-1,1\}} \sup_{g \in \cF_s} \frac{1}{n}\sum_{i=1}^n \eps_i \cdot \fdiv(g \circ h^s(x_i), y_i).
\end{align*}
Notice again that the model output space $\cY = \{ y \in \R| 0 < y < 1 \}$.
We can then use the assumption that the range of $G^w$ and functions in $\cF_s$ are absolutely bounded, which implies that $\ell$ is both bounded and Lipschitz in both arguments. 
This allows us to use the contraction principle (\cref{talagrand}) so as to move from the Rademacher complexity of the loss class $\ell(\cF_s)$ to that of $\cF_s$ itself, and claim that with probability at least $1-\delta$,
\begin{align}
    \sup_{g \in \cF_s} |\disthat(g \circ h^s, G^w) - \dist(g \circ h^s, G^w)| & \le O \left( \mathcal{R}_n (\cF_s)\right) + \cO\left(\sqrt{\frac{\log(1/\delta)}{n}}\right) \label{eqn:rademacher-cxty-bound}
\end{align}
Finally, the Rademacher complexity $\mathcal{R}_n (\cF_s)$ can be upper bounded by a quantity known as the \textit{worst-case Gaussian complexity} of $\cF_s$; in any case, for a majority of parametric function classes $\cF_s$, this quantity scales as $\sqrt{\frac{\cC_{\cF_s}}{n}}$~\citep{bartlett2002rademacher}, where $\cC_{\cF_s}$ is a constant capturing the inherent complexity of $\cF_s$. 
Plugging this into \eqref{eqn:rademacher-cxty-bound} and considering $f=\hat{G}^{sw}$ or $f=G^{sw}$ in this inequality, we have
\begin{align}
    \underbrace{\left| \disthat(\hat{G}^{sw}, G^w) - \dist(\hat{G}^{sw}, G^w) \right|}_{|a|} \le \cO\left(\sqrt{\frac{\cC_{\cF_s}}{n}}\right) + \cO\left(\sqrt{\frac{\log(1/\delta)}{n}}\right), \label{proof:lemma-uniform-eq1} \\
    \underbrace{\left| \disthat(G^{sw}, G^w) - \dist(G^{sw}, G^w) \right|}_{|c|} \le \cO\left(\sqrt{\frac{\cC_{\cF_s}}{n}}\right) + \cO\left(\sqrt{\frac{\log(1/\delta)}{n}}\right).
\end{align}

Finally, substitute it into~\eqref{proof:ineq:conv} and we can obtain the desired bound.
\end{proof}

\begin{proof}[Proof of~\cref{theorem:upper_lower_fdiv_sample}]

Substitute $G^{sw}$ with $\hat{G}^{sw}$ in the proof of~\cref{theorem:upper_lower_fdiv} and we can prove that
\begin{align}
    \left| \dist(\hat{G}^{sw}, G^\star) - \dist(G^w, G^\star) \right| = \cO \left( \sqrt{\dist(\hat{G}^{sw}, G^w)} \right).
\end{align}

Combine~\eqref{proof:lemma-uniform-eq1} with the above and we prove the final result:
\begin{align*}
    & \left| \dist(\hat{G}^{sw}, G^\star) - \dist(G^w, G^\star) \right| 
    \\ = & \cO \left( \sqrt{\dist(\hat{G}^{sw}, G^w)} \right)
    \\ \le & \cO \left( \sqrt{\disthat(\hat{G}^{sw}, G^w) + \cO\left(\sqrt{\frac{\cC_{\cF_s}}{n}}\right) + \cO\left(\sqrt{\frac{\log(1/\delta)}{n}}\right)} \right)
    \\ \le & \cO \left( \sqrt{\disthat(\hat{G}^{sw}, G^w)} + \sqrt{\cO\left(\sqrt{\frac{\cC_{\cF_s}}{n}}\right)} + \sqrt{ \cO\left(\sqrt{\frac{\log(1/\delta)}{n}}\right)} \right)
    \\ \le & \cO \left( \sqrt{\disthat(\hat{G}^{sw}, G^w)} + \left(\frac{\cC_{\cF_s}}{n}\right)^{\frac{1}{4}} + \left(\frac{\log(1/\delta)}{n}\right)^{\frac{1}{4}} \right)
    \\ = & \cO \left( \sqrt{\disthat(\hat{G}^{sw}, G^w)} \right) + \cO \left(\frac{\cC_{\cF_s}}{n}\right)^{\frac{1}{4}} + \cO \left(\frac{\log(1/\delta)}{n}\right)^{\frac{1}{4}}.
\end{align*}

\end{proof}

\subsection{Proof of~\cref{corollary:information}} \label{proof:inf_theoretic}

\begin{lemma} \label{lemma:upper_lower_fdiv}
Given the data domain $\cX$, output domain $\cY$, models $G^w, G^\star$ and disagreement $\dist(\cdot, \cdot)$ defined above. 
For any strong model $\hat{G}^{sw}$, there holds
\begin{align}
    \left| \mathbb{E}_{\theta,Z} \left[\dist(\hat{G}^{sw}, G^\star) - \dist(G^w, G^\star) \right] \right| = \cO \left(\sqrt{\mathbb{E}_{\theta,Z} \left[ \dist(\hat{G}^{sw}, G^w) \right]} \right).
\end{align}
\end{lemma}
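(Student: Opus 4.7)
The plan is to derive the lemma as a direct expectation-level consequence of \cref{theorem:upper_lower_fdiv}. The pointwise statement in that theorem applies to any strong model $G^{sw}\in\cF_s$, so in particular I can instantiate it with the random model $\hat{G}^{sw}=\hat{g}^{sw}\circ h^s$ whose parameters $\theta$ depend on the training sample $Z$. This gives, almost surely in $(\theta,Z)$, a bound of the form
\[
\left|\dist(\hat{G}^{sw}, G^\star) - \dist(G^w, G^\star)\right| \;\le\; C\,\sqrt{\dist(\hat{G}^{sw}, G^w)},
\]
where $C$ is the constant hidden in the $\cO(\cdot)$ notation of \cref{theorem:upper_lower_fdiv} (which depends only on $f'$ on the bounded range of ratios allowed by $\cY$, together with the Pinsker-type constant from \cref{lemma:tv_fdiv}).

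Next I would take expectation $\mathbb{E}_{\theta,Z}$ of both sides. On the left-hand side I would first pull the expectation inside using the ordinary Jensen inequality $|\mathbb{E}[X]|\le\mathbb{E}[|X|]$, which yields
\[
\left|\mathbb{E}_{\theta,Z}\!\left[\dist(\hat{G}^{sw},G^\star)-\dist(G^w,G^\star)\right]\right| \;\le\; \mathbb{E}_{\theta,Z}\!\left[\left|\dist(\hat{G}^{sw},G^\star)-\dist(G^w,G^\star)\right|\right].
\]
On the right-hand side I would apply Jensen's inequality in the opposite direction, exploiting concavity of $\sqrt{\cdot}$ on $\R_{\ge 0}$:
\[
\mathbb{E}_{\theta,Z}\!\left[\sqrt{\dist(\hat{G}^{sw},G^w)}\right] \;\le\; \sqrt{\mathbb{E}_{\theta,Z}\!\left[\dist(\hat{G}^{sw},G^w)\right]}.
\]
Chaining the three inequalities and absorbing $C$ into the $\cO(\cdot)$ gives exactly the claimed bound.

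The proof is essentially a two-line bookkeeping argument on top of \cref{theorem:upper_lower_fdiv}, so no conceptually hard step arises. The only technical point worth care is that the two uses of Jensen go in opposite directions (convex for $|\cdot|$, concave for $\sqrt{\cdot}$), and both must be applied after the pointwise bound has been established, otherwise one would accidentally lower-bound rather than upper-bound the right-hand side. A secondary, mostly cosmetic, concern is integrability: since the model outputs lie in a bounded subset of $(0,1)^k$ and $f$ is continuous on the corresponding compact range of likelihood ratios, $\dist(\hat{G}^{sw},G^w)$ and $\dist(\hat{G}^{sw},G^\star)$ are bounded random variables, so all expectations are automatically finite and Fubini-type exchanges between $\mathbb{E}_{\theta,Z}$ and the inner $\mathbb{E}_{x\sim\cP}$ defining $\dist$ are legitimate.
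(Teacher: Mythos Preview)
Your proposal is correct and follows essentially the same route as the paper: apply the pointwise bound of \cref{theorem:upper_lower_fdiv} with $\hat{G}^{sw}$ in place of $G^{sw}$, take $\mathbb{E}_{\theta,Z}$, and then use Jensen's inequality for the concave $\sqrt{\cdot}$. The paper's own proof is terser (it simply says ``take expectations over $\theta,Z$ at both sides of the proof'' and invokes Jensen once), whereas you make the intermediate step $|\mathbb{E}[X]|\le\mathbb{E}[|X|]$ explicit, but the logical content is identical.
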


\begin{proof}[Proof of~\cref{lemma:upper_lower_fdiv}]
According to the linear nature of expectation, take expectations over $\theta,Z$ at both sides of the proof in~\cref{proof:upper_lower_fdiv} and we can prove that
\begin{align*}
    \left| \mathbb{E}_{\theta,Z} \left[\dist(\hat{G}^{sw}, G^\star) - \dist(G^w, G^\star) \right] \right| & = \cO \left( \mathbb{E}_{\theta,Z} \sqrt{\dist(\hat{G}^{sw}, G^w)} \right) \\
    & \le \cO \left(\sqrt{\mathbb{E}_{\theta,Z} \left[ \dist(\hat{G}^{sw}, G^w) \right]} \right) \tag{Jensen's inequality}
\end{align*}
\end{proof}

\begin{theorem} \label{corollary:upper_lower_fdiv}
Given the data domain $\cX$, output domain $\cY$, models $G^w$ and disagreement $\dist(\cdot, \cdot)$ defined above. 
Let $\theta \sim \cP_\theta$ be the parameters of the strong model $\hat{G}^{sw}$, and $Z=(X_1, \cdots, X_n) \sim \cP_Z$, where $\cP_Z = \bigotimes_{i=1}^n \cP_X$.
For any strong model $\hat{G}^{sw}$, there holds
\begin{align}
    \left| \mathbb{E}_{\theta,Z} \left[ \dist(\hat{G}^{sw},G^w) - \disthat(\hat{G}^{sw},G^w) \right] \right| = \cO \left( \sqrt{\frac{I(\theta;Z)}{n}} \right).
\end{align}
\end{theorem}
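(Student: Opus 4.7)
The plan is to adapt the standard Xu--Raginsky / Bu--Zheng--Veeravalli information-theoretic generalization-gap argument, which in this paper is crystallized as \cref{lemma:ziqiao}. I would set $g(\theta, x) \triangleq \fdiv(\hat{G}^{sw}_\theta(x) \| G^w(x))$ and first rewrite the gap as a sum of per-sample discrepancies:
\begin{align*}
\mathbb{E}_{\theta, Z}\bigl[\dist(\hat{G}^{sw}_\theta, G^w) - \disthat(\hat{G}^{sw}_\theta, G^w)\bigr] = \frac{1}{n}\sum_{i=1}^n \bigl( \mathbb{E}_{P_\theta \otimes P_X}[g] - \mathbb{E}_{P_{\theta, X_i}}[g] \bigr),
\end{align*}
using that under the product measure $P_\theta \otimes P_X$ a fresh $X$ is independent of the trained $\theta$, so $\mathbb{E}[g] = \mathbb{E}_\theta[\dist(\hat{G}^{sw}_\theta, G^w)]$, while under the joint $P_{\theta, X_i}$ induced by the learning algorithm the expectation reproduces the $i$-th training-point contribution to $\disthat$. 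Applying \cref{lemma:ziqiao} to each summand with $Q = P_{\theta, X_i}$ and $P = P_\theta \otimes P_X$ gives $|\mathbb{E}_Q[g] - \mathbb{E}_P[g]| \leq \sqrt{2R^2\, I(\theta; X_i)}$, since by the definition of mutual information $\kl(P_{\theta, X_i} \| P_\theta \otimes P_X) = I(\theta; X_i)$.

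To supply the subgaussian parameter $R$, I would exploit boundedness of $f$-divergence on the output space $\cY$: because $\cY$ forces all coordinates strictly inside $(0,1)$, the likelihood ratios $\hat{G}^{sw}_\theta(x) / G^w(x)$ are uniformly bounded and hence $g(\theta, x) \in [0, B]$ for some constant $B$ depending only on $f$ and on the simplex-interior bound (the same uniform boundedness exploited by the $\sup_\xi |f'(\xi)|$ factor in \cref{proof:upper_lower_fdiv}). Hoeffding's lemma (\cref{hoeffding_lemma}) then yields $(B/2)$-subgaussianity of $g$ under the product measure $P_\theta \otimes P_X$. Summing the $n$ sample-wise bounds, applying Cauchy--Schwarz to pass from $\frac{1}{n}\sum_i \sqrt{I(\theta; X_i)}$ to $\sqrt{\frac{1}{n}\sum_i I(\theta; X_i)}$, and invoking the chain rule together with subadditivity of conditional entropy to conclude $\sum_{i=1}^n I(\theta; X_i) \leq I(\theta; X_1, \ldots, X_n) = I(\theta; Z)$ for the i.i.d.\ $X_i$, I arrive at the claimed $\cO(\sqrt{I(\theta; Z)/n})$ rate.

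The main obstacle is the subgaussianity step: unlike TV distance, divergences such as KL and Pearson-$\chi^2$ are not globally bounded on pairs of probability vectors, so the whole chain of inequalities depends crucially on the paper's standing assumption that model outputs live in a set bounded away from the boundary of the simplex. Once this uniform-boundedness is granted, the remaining manipulations---sample-wise change-of-measure, Cauchy--Schwarz, and the mutual-information chain rule for independent samples---are essentially the \citet{wang2022information} template specialized to the W2SG setting, where the ``labels'' are outputs of the weak supervisor $G^w$ rather than a ground-truth target.
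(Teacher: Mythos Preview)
Your proof is correct but follows a genuinely different template from the paper's. The paper applies \cref{lemma:ziqiao} \emph{once} to the averaged loss $g(\theta,Z)=\frac{1}{n}\sum_{j}\fdiv(\hat G^{sw}_\theta(x_j)\|G^w(x_j))$ with $P=\cP_\theta\otimes\cP_Z$ and $Q=\cP_{\theta,Z}$, so that $\kl(Q\|P)=I(\theta;Z)$ drops out directly; the $1/\sqrt{n}$ is then attributed to the subgaussian parameter of the empirical average. You instead take the Bu--Zou--Veeravalli route: decompose into per-sample discrepancies, apply the change-of-measure lemma $n$ times to obtain $\sqrt{2R^2 I(\theta;X_i)}$ for each summand, and only afterwards aggregate via Cauchy--Schwarz and the chain-rule inequality $\sum_i I(\theta;X_i)\le I(\theta;Z)$ for independent $X_i$. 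Both arrive at the same $\cO(\sqrt{I(\theta;Z)/n})$. What your approach buys is a more transparent origin for the $1/\sqrt{n}$ factor (it comes from Cauchy--Schwarz over $n$ terms rather than from an averaging-reduces-variance claim) and, as a by-product, the intermediate bound $\frac{1}{n}\sum_i\sqrt{I(\theta;X_i)}$, which is never worse than the paper's. What the paper's route buys is brevity: one invocation of the lemma, no need for the chain-rule step. Your discussion of the subgaussianity assumption and the reliance on outputs being bounded away from the simplex boundary matches the paper's standing setup.
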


\begin{proof}[Proof of~\cref{corollary:upper_lower_fdiv}]
Substitute the following equations into~\cref{lemma:ziqiao}.
\begin{align*}
    & g(\theta) = \frac{1}{n} \sum_{j=1}^n \ell(\theta, x_j) = \frac{1}{n} \sum_{j=1}^n \fdiv(\hat{G}_\theta^{sw}(x_j) \| G^w(x_j)), \\
    & P = \cP_\theta \otimes \cP_Z, \\
    & Q = \cP_{\theta,Z}.
\end{align*}
We obtain
\begin{align} \label{ineq:infor-base}
    \left|\mathbb{E}_{\theta,Z}\left[\frac{1}{n}\sum_{j=1}^n \ell(\theta, x_j)\right]-\expect_\theta \expect_Z \left[\frac{1}{n}\sum_{j=1}^n \ell(\theta, x_j) \right]\right| & \leq \sqrt{2 R^2 \mathrm{D}_{\mathrm{KL}} \left(\cP_{\theta,Z} \| \cP_\theta \otimes \cP_Z \right)} \notag
    \\ & = \sqrt{2 R^2 I(\theta;Z)}.
\end{align}

Firstly, 
\begin{align*}
    \expect_\theta \expect_Z \left[\frac{1}{n}\sum_{j=1}^n \ell(\theta, x_j) \right] & = \expect_\theta \mathbb{E}_{X_1} \cdots \mathbb{E}_{X_n} \left[\frac{1}{n}\sum_{j=1}^n \ell(\theta, x_j) \right] \\
    & = \expect_\theta \left[\frac{1}{n}\sum_{j=1}^n \mathbb{E}_{X_j} \ell(\theta, x_j) \right] \\
    & = \expect_\theta \left[\frac{1}{n}\sum_{j=1}^n \mathbb{E}_{X_j} \fdiv(\hat{G}^{sw}(x_j) \| G^w(x_j)) \right] \\
    & = \expect_\theta \left[\mathbb{E}_{X_j} \fdiv(\hat{G}^{sw}(x_j) \| G^w(x_j)) \right] \\
    & = \expect_\theta \left[ \dist(\hat{G}^{sw},G^w) \right] \\
    & = \mathbb{E}_{\theta,Z} \left[ \dist(\hat{G}^{sw},G^w) \right]
\end{align*}

Secondly,
\begin{align*}
    \mathbb{E}_{\theta,Z}\left[\frac{1}{n}\sum_{j=1}^n \ell(\theta, x_j)\right] & = \mathbb{E}_{\theta,Z} \left[\frac{1}{n}\sum_{j=1}^n \fdiv(\hat{G}^{sw}(x_j) \| G^w(x_j)) \right] \\
    & = \mathbb{E}_{\theta,Z} \left[ \disthat(\hat{G}^{sw},G^w) \right]
\end{align*}

Substitute them back into~\eqref{ineq:infor-base} and we obtain
\begin{align} \label{ineq:infor_med}
    \left| \mathbb{E}_{\theta,Z} \left[ \dist(\hat{G}^{sw},G^w) - \disthat(\hat{G}^{sw},G^w) \right] \right| \le \sqrt{2 R^2 I(\theta;Z)}.
\end{align}

According to~\cref{hoeffding_lemma} (i.e., a bounded variable is subgaussian) and the fact that $\fdiv$ is bounded, we know that $\fdiv$ is subgaussian.
If $\fdiv$ is $\sigma$-subgaussian, then $g(\cdot)$ is $\frac{\sigma}{\sqrt{n}}$-subgaussian.
Let $R=\frac{\sigma}{\sqrt{n}}$ in~\eqref{ineq:infor_med} and we obtain:
\begin{align*}
    \left| \mathbb{E}_{\theta,Z} \left[ \dist(\hat{G}^{sw},G^w) - \disthat(\hat{G}^{sw},G^w) \right] \right| \le \cO \left( \sqrt{\frac{I(\theta;Z)}{n}} \right).
\end{align*}
\end{proof}

\begin{proof}[Proof of~\cref{corollary:information}]
Taking~\cref{lemma:upper_lower_fdiv} and~\cref{corollary:upper_lower_fdiv} together, we have
\begin{align*}
    \left| \mathbb{E}_{\theta,Z} \left[\dist(\hat{G}^{sw}, G^\star) - \dist(G^w, G^\star) \right] \right| & = \cO \left(\sqrt{\mathbb{E}_{\theta,Z} \left[ \dist(\hat{G}^{sw}, G^w) \right]} \right) \tag{\cref{lemma:upper_lower_fdiv}}
    \\ & \le \cO \left(\sqrt{\mathbb{E}_{\theta,Z} \left[\disthat(\hat{G}^{sw}, G^w) \right]+ \cO \left( \sqrt{\frac{I(\theta;Z)}{n}} \right)} \right) \tag{\cref{corollary:upper_lower_fdiv}}
    \\ & \le \cO \left(\sqrt{\mathbb{E}_{\theta,Z} \left[ \disthat(\hat{G}^{sw}, G^w) \right]} + \sqrt{\cO \left( \sqrt{\frac{I(\theta;Z)}{n}} \right)}\right)
    \\ & = \cO \left(\sqrt{\mathbb{E}_{\theta,Z} \left[ \disthat(\hat{G}^{sw}, G^w) \right]}\right) + \cO \left(\frac{I(\theta;Z)}{n} \right)^{\frac{1}{4}}.
\end{align*}
\end{proof}

\subsection{Proof of~\cref{theorem:equivalence}} \label{proof:equivalence}

\begin{proof}

The proof is motivated by~\citet{daunas2024equivalence}.
First, notice that the prediction of the model for every data point $x$ is constructed as a probability distribution.
For simplicity of notation, denote the prediction probability of the $i$-th data point $x_i$ of $G^{sw}_\theta$ and $G^w$ as $P_i$ and $Q_i$, respectively.
Also, rewrite $L(\theta, x_i)$ as $L_i(\theta)$ and denote $\frac{d P_i}{d Q_i}(\theta)$ as $g_i(\theta)$.
The minimization goal can be formulated as an information-theoretic approach similar to~\citet{daunas2024equivalence}.
\begin{align*}
& \min_{\theta} \hat{R}_{f_1} \left(G^{sw}_\theta, G^w\right) + \alpha \cdot \expect_\theta \left[ \frac{1}{n} \sum_{i=1}^n L(\theta, x_i) \right]
\\ \Leftrightarrow & \min_{\theta} \frac{1}{n} \sum_{j=1}^n \mathrm{D}_{f_1}(G^{sw}_\theta(x_j) \| G^w(x_j)) + \alpha \cdot \expect_\theta \left[ \frac{1}{n} \sum_{i=1}^n L(\theta, x_i) \right] 
\\ \Leftrightarrow & \min_{\theta} \frac{1}{n} \sum_{j=1}^n \int f_1 \left( \frac{d P_i}{d Q_i}(\theta)\right) d Q_i(\theta) +  \alpha \cdot \frac{1}{n} \sum_{i=1}^n \int L_i(\theta) \; d P_i(\theta)
\\ \Leftrightarrow & \min_{g_i(\theta), i \in [n]} \frac{1}{n} \sum_{j=1}^n \int f_1 \left( g_i(\theta)\right) d Q_i(\theta) +  \alpha \cdot \frac{1}{n} \sum_{i=1}^n \int L_i(\theta) g_i(\theta) \; d Q_i(\theta)
\\ \Leftrightarrow & \min_{g_i(\theta), i \in [n]} \frac{1}{n} \sum_{j=1}^n \int \left[ f_1 \left( g_i(\theta)\right) +  \alpha \cdot L_i(\theta) g_i(\theta) \right] \; d Q_i(\theta).
\end{align*}

We first find the solution of the constrained optimization problem
\begin{subequations}
\begin{align}
& \min_{g_i(\theta), i \in [n]} \frac{1}{n} \sum_{j=1}^n \int \left[ f_1 \left( g_i(\theta)\right) +  \alpha \cdot L_i(\theta) g_i(\theta) \right] \; d Q_i(\theta), \label{opt:goal}
\\ & \text{s.t.} \qquad \int g_i(\theta) \; dQ_i(\theta) = 1, \qquad \text{for any} \; i \in [n]. \label{opt:constraint}
\end{align}
\end{subequations}

Introducing Lagrange multipliers $\beta_1, \cdots, \beta_n$, we have the following
\begin{align} \label{eq:lagrange}
\mathcal{L}_1 \left( \mathbf{g}, \bm{\beta} \right) & = \mathcal{L}_1 \left( \left( g_1(\theta), \cdots, g_n(\theta) \right), \left( \beta_1, \cdots, \beta_n \right) \right) \notag
\\ & = \frac{1}{n} \sum_{j=1}^n \int \left[ f_1 \left( g_i(\theta)\right) +  \alpha \cdot L_i(\theta) g_i(\theta) + n \beta_i g_i(\theta) \right] \; d Q_i(\theta) - \sum_{i=1}^n \beta_i.
\end{align}

We consider the direction of $\mathbf{g}(\theta)$ as $\hat{\mathbf{g}}(\theta)=\left( \hat{g}_1(\theta), \cdots, \hat{g}_n(\theta) \right)$ and compute the Gateaux differential of $\mathcal{L}_1$ as
\begin{align}
    \left.\partial \mathcal{L}_1 \left(\mathbf{g}, \bm{\beta} ; \ \hat{\mathbf{g}} \right) \triangleq \frac{\mathrm{d}}{\mathrm{~d} \gamma} \mathcal{L}_1 \left( \mathbf{g}+\gamma \hat{\mathbf{g}}, \bm{\beta} \right) \right|_{\gamma=0}.
\end{align}

We have
\begin{multline*}
    \mathcal{L}_1 \left( \mathbf{g}+\gamma \hat{\mathbf{g}}, \bm{\beta} \right) = \frac{1}{n} \sum_{j=1}^n \int \Big[ f_1 \left( g_i(\theta)+ \gamma \hat{g}_i(\theta)\right) + \\ \left( \alpha L_i(\theta) + n \beta_i \right) \left( g_i(\theta)+ \gamma \hat{g}_i(\theta) \right) \Big] \; d Q_i(\theta) - \sum_{i=1}^n \beta_i,
\end{multline*}
and
\begin{align*}
\partial \mathcal{L}_1 \left(\mathbf{g}, \bm{\beta} ; \ \hat{\mathbf{g}} \right) = & \frac{\mathrm{d}}{\mathrm{~d} \gamma} \mathcal{L}_1 \left( \mathbf{g}+\gamma \hat{\mathbf{g}}, \bm{\beta} \right) \Bigg|_{\gamma=0}
\\ = & \frac{1}{n} \sum_{j=1}^n \int \Big[ f'_1 \left( g_i(\theta)+ \gamma \hat{g}_i(\theta)\right) \hat{g}_i(\theta) + \left( \alpha L_i(\theta) + n \beta_i \right) \hat{g}_i(\theta) \Big] \; d Q_i(\theta) \Bigg|_{\gamma=0}
\\ = & \frac{1}{n} \sum_{j=1}^n \int \Big[ \hat{g}_i(\theta) \cdot \left( \alpha L_i(\theta) + n \beta_i + f'_1 \left( g_i(\theta)\right) \right) \Big] \; d Q_i(\theta).
\end{align*}

Let $\partial \mathcal{L}_1 \left(\mathbf{g}, \bm{\beta} ; \ \hat{\mathbf{g}} \right)=0$ for any $\hat{\mathbf{g}}(\theta)$, the solution of~\cref{eq:lagrange} is the stationary point and satisfies
\begin{align*}
\alpha L_i(\theta) + n \beta_i + f'_1 \left( g_i(\theta)\right) = 0, \qquad \text{for any} \; i=1, \cdots, n.
\end{align*}

Also, note that~\eqref{opt:goal} and~\eqref{opt:constraint} are convex with respect to $g$. 
It means that the solution $\tilde{g}_i$ is unique:
\begin{align*}
\tilde{g}_i(\theta) = (f'_1)^{-1} \left(-\alpha L_i(\theta) -n \beta_i \right), \qquad \text{for any} \; i=1, \cdots, n,
\end{align*}
where $\beta_i$ is chosen to make $\int \tilde{g}_i(\theta) \; dQ_i(\theta) = 1$. 
To facilitate the explicit presentation of solution, we also introduce the normalization function as Section IV in~\citet{daunas2024equivalence}.
The intuition is that a legal Lagrange multiplier $\beta_i$ needs to satisfy the normalization property and it is a function of $\alpha$. So we can replace $\beta_i$ with a normalization function.
Specifically, for all $i=1, \cdots, n$, let $N_{Q,i}(\alpha): \R \to \R$ be the normalization functions of the optimization problem~\eqref{opt:goal} and~\eqref{opt:constraint} such that
\begin{align*}
    \int (f'_1)^{-1} \left(-\alpha L_i(\theta) -n N_{Q,i}(\alpha) \right) \; dQ_i(\theta) = 1, \qquad \text{for any} \; i=1, \cdots, n.
\end{align*}
Therefore, the solution of the optimization problem is
\begin{align} \label{eqn:opt_solution}
\tilde{g}_i(\theta) = (f'_1)^{-1} \left(-\alpha L_i(\theta) -n N_{Q,i}(\alpha) \right), \qquad \text{for any} \; i=1, \cdots, n.
\end{align}

Similarly, consider another minimization goal
\begin{align*}
& \min_{\theta} \hat{R}_{f_2} \left(G^{sw}_\theta, G^w\right) + \alpha \cdot \expect_\theta \left[ \frac{1}{n} \sum_{i=1}^n v_i\left( L(\theta, x_i) \right) \right]
\\ \Leftrightarrow & \min_{g_i(\theta), i \in [n]} \frac{1}{n} \sum_{j=1}^n \int \left[ f_2 \left( g_i(\theta)\right) +  \alpha \cdot v_i \left( L_i(\theta) \right) g_i(\theta) \right] \; d Q_i(\theta).
\end{align*}

Likewise, let $N'_{Q,i}(\alpha): \R \to \R$ be the normalization function of the above optimization problem such that
\begin{align*}
    \int (f'_2)^{-1} \left(-\alpha L_i(\theta) -n N'_{Q,i}(\alpha) \right) \; dQ_i(\theta) = 1, \qquad \text{for any} \; i=1, \cdots, n.
\end{align*}

Using the same proof technique and we can obtain the solution like~\cref{eqn:opt_solution}:
\begin{align} \label{eqn:opt_solution_f2}
\mathring{g}_i(\theta) = (f'_2)^{-1} \left(-\alpha \cdot v_i \left( L_i(\theta) \right) -n N'_{Q,i}(\alpha) \right), \qquad \text{for any} \; i=1, \cdots, n.
\end{align}

Define the transformation function
\begin{align} \label{transform:func}
    v_i(x)=-\frac{1}{\alpha} f'_2\left( (f'_1)^{-1} \left( -\alpha x - n N_{Q,i}(\alpha) \right) \right) - \frac{n}{\alpha} N'_{Q,i}(\alpha), \qquad \text{for any} \; i=1, \cdots, n.
\end{align}

Substitute $v(x)$ into $\mathring{g}_i(\theta)$ and we have
\begin{align*}
    \mathring{g}_i(\theta) & = (f'_2)^{-1} \left(-\alpha \cdot v_i \left( L_i(\theta) \right) -n N'_{Q,i}(\alpha) \right)
    \\ & = (f'_2)^{-1} \left(f'_2\left( (f'_1)^{-1} \left( -\alpha L_i(\theta) - n N_{Q,i}(\alpha) \right) \right) \right)
    \\ & = (f'_1)^{-1} \left( -\alpha L_i(\theta) - n N_{Q,i}(\alpha) \right)
    \\ & = \tilde{g}_i(\theta).
\end{align*}

Therefore, using the transformation function in~\cref{transform:func}, the solutions \cref{eqn:opt_solution} and~\cref{eqn:opt_solution_f2} are equivalent to each other, i.e., $\tilde{g}_i(\theta) = \mathring{g}_i(\theta)$.
The proof is complete.

\end{proof}

\end{document}